\pgfplotsset{compat=1.13}
\newtheorem{theorem}{Theorem}
\newtheorem{definition}[theorem]{Definition}
\newtheorem{proposition}[theorem]{Proposition}
\algnewcommand\algorithmicswitch{\textbf{switch}}
\algnewcommand\algorithmiccase{\textbf{case}}
\algnewcommand\algorithmicassert{\texttt{assert}}
\algnewcommand\Assert[1]{\State \algorithmicassert(#1)}%
\newcommand{\StatexIndent}[1][3]{%
  \setlength\@tempdima{\algorithmicindent}%
  \Statex\hskip\dimexpr#1\@tempdima\relax}
\DeclareMathOperator*{\argmin}{arg\,min}
\DeclareMathOperator*{\argmax}{arg\,max}
\begin{document}

\title{Multi-Robot Patrolling with Sensing Idleness and Data Delay Objectives}





\author[1]{J\"urgen Scherer\corref{cor1}}
\ead{juergen.scherer@aau.at}

\author[1]{Bernhard Rinner}
\ead{bernhard.rinner@aau.at}

\cortext[cor1]{Corresponding author}

\address[1]{Institute of Networked and Embedded Systems, University of Klagenfurt, Universit\"{a}tsstra\ss e 65-67, 9020 Klagenfurt, Austria}


\begin{abstract}
Multi-robot patrolling represents a fundamental problem for many monitoring and surveillance applications and has gained significant interest in recent years. In patrolling, mobile robots repeatedly travel through an environment, capture sensor data at certain sensing locations and deliver this data to the base station in a way that maximizes the changes of detection. Robots move on tours, exchange data when they meet with robots on neighboring tours and so eventually deliver data to the base station.

In this paper we jointly consider two important optimization criteria of multi-robot patrolling: (i) idleness, i.e. the time between consecutive visits of sensing locations, and (ii) delay, i.e. the time between capturing data at the sensing location and its arrival at the base station. We systematically investigate the effect of the robots' moving directions along their tours and the selection of meeting points for data exchange. We prove that the problem of determining the movement directions and meeting points such that the data delay is minimized is NP-hard. We propose heuristics and provide a simulation study which shows that the cooperative approach can outperform an uncooperative approach where every robot delivers the captured data individually to the base station.
\end{abstract}


\begin{keyword}
Multi-Robot Systems \sep Mobile robotics \sep Patrolling \sep Coordination \sep Cooperating Robots
\end{keyword}


\maketitle	

\section{Introduction}
The interest in using mobile robot teams for surveillance and monitoring environments over a longer period of time has emerged with the advances in the fields of robotics, computation and communication. Examples for applications include disaster response~\cite{Erdelj2017}, \cite{Scherer2015}, \cite{Khan2018}, wildfire monitoring~\cite{Ghamry2016}, security tasks~\cite{Liu2013}, environmental monitoring~\cite{Rossi2016}, and exploration and mapping \cite{Masehian2017}, \cite{Santos2017}. The mobility of the robots extends their sensor coverage and allows areas to be monitored that cannot be covered efficiently with static sensors that also have to be deployed. The drawback is, that not all areas of the environment can be monitored at the same time. In certain scenarios it is not only important that certain locations of interest (referred to as sensing locations) get visited repeatedly, but also that the data captured by the sensors of the robots is transmitted to a base station in due time. This allows human mission operators to quickly assess a situation or that the collected data can be promptly processed for another purpose. We assume that the mobile robots and the base station are equipped with wireless transceivers to exchange data as well as sufficient memory to store the data. This enables the data to travel to the base station via multiple robots in a store-and-forward fashion. Two optimization criteria are essential to this multi-robot problem: \emph{idleness} and \emph{delay}. The first describes the time between consecutive visits at a sensing location, and the second describes the time between the capturing of data at a sensing location and its arrival at the base station. To optimize or constrain these criteria, coordinating the movement of the robots is necessary.

Monitoring of an environment over long time periods is related to the patrolling problem, where mobile robots continuously travel and sense the environment. While idleness is a common optimization criterion for the patrolling problem \cite{Machado2003}, \cite{Chevaleyre2004}, \cite{Lauri2014}, \cite{Portugal2014}, explicitly minimizing or constraining delay has experienced much less attention in literature. In contrast to most of the existing work, we focus on cooperative data transportation which eliminates the need of detours to the base station for every robot to deliver the data. This can improve idleness and allows to operate robots in environments where traveling to the base station is not possible for every robot (e.g. due to obstacles).

Depending on the representation of the environment, determining the optimal solution to a patrolling problem can be computationally demanding. Determining the optimal tours for minimal idleness on graphs for example is related to the traveling salesperson problem (TSP) \cite{Garey1979} and the k-TSP \cite{Frederickson1976} which are both NP-complete. To decouple the complexity of path planning from planning the coordinated data transport to the base station, we assume that closed tours for each robot are given. Scheduling robots on given tours considering some idleness criterion is a recurring problem in literature, e.g. \cite{Pasqualetti2012b}, \cite{Pasqualetti2012a}, \cite{Smith2012}, \cite{Chang2015}, and \cite{Kantaros2017}.

We consider the following patrolling scenarios. A set of closed tours which can have different lengths, one for each robot, is given and the robots are only allowed to move along these tours in a certain direction. Two robots can exchange the data captured on their tours when they are at certain positions on their tours (so-called meeting points) with the aim to transport the data to the base station lying on the tour of a particular robot (Figure~\ref{fig:scenarios}; see also Figure~\ref{fig:scenario_corridor} for an example of a corridor environment to be patrolled). The goal is to limit the maximum idleness to the lowest possible value determined by the tours and to minimize the delay over all sensing locations. This problem involves answering the following questions: (i) which robots should meet, (ii) in what direction should the robots move on their tours, and (iii) if there are more than one possible meeting points between tours, at which one should robots meet. Additionally, a schedule has to be determined which describes where the robots should wait for each other (in case one robot meets with more than one other robot on its tour). We show that all three questions are NP-hard and propose a heuristic for solving this problem. The first question is related to selecting a tour tree from a tour graph (explained in Section~\ref{sec:mdt}) and is therefore termed \emph{minimum delay tree} (MDT). The other two questions are related to extensions of MDT and are termted \emph{MDT with directions} (MDTD) and \emph{MDTD with meeting points} (MDTDM).

The contributions of this work can be summarized as follows: We formulate the MDT problem and its extensions and show that they are NP-hard. We describe an algorithm that efficiently constructs a solution that has the best possible idleness with the given tours. In case only the directions of the tour traversals have to be chosen (everything else is fixed), we describe a procedure that efficiently constructs a solution that also minimizes the delay. We propose two heuristics for MDTD which select the tour tree and the directions from a tour graph. Finally, we evaluate and compare the heuristics in experimental simulations.

The article is organized as follows: In Section~\ref{sec:relatedwork} we review the existing literature. In Section~\ref{sec:problem} we introduce some notation and formulate the idleness and delay criteria. In Section~\ref{sec:mdt} the MDT problem and its extensions are described and the heuristics for MDTD are presented in Section~\ref{sec:mdtd_heur}. In Section~\ref{sec:online} we describe the algorithm for the online execution once a solution for MDTD has been obtained. Section~\ref{sec:eval} describes the simulation results and Section~\ref{sec:conclusion} concludes the article.

\begin{figure}
	\centering
	\includegraphics[scale=0.23]{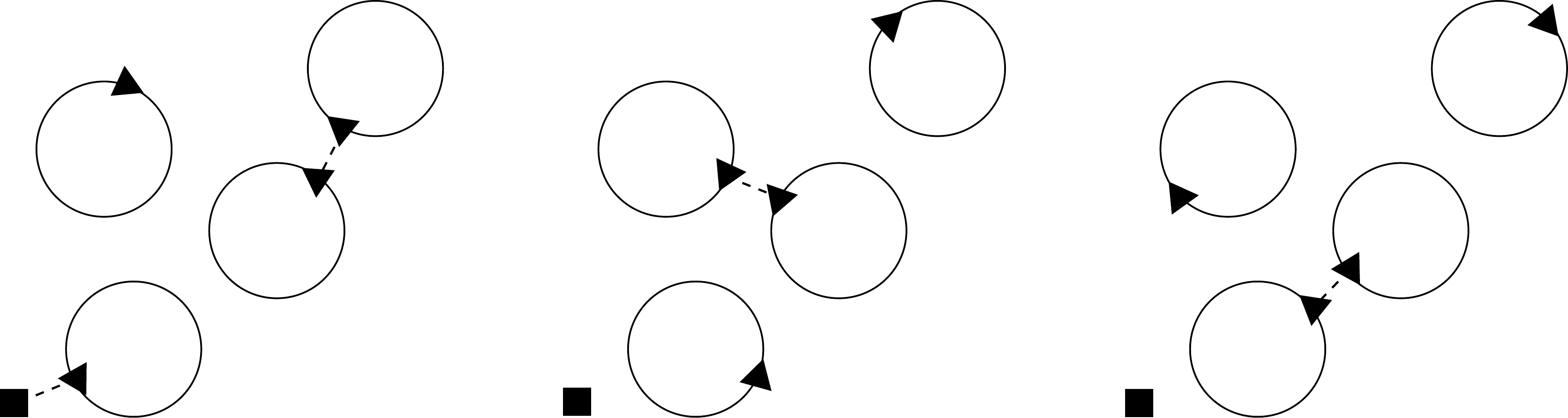}
	\caption{Example of a multi-robot patrolling scenario at three time instances (from left to right). Positions and directions of the robots are indicated by triangles, the base station is depicted as filled square. There is a dashed line between robots if they exchange data. The robots move along fixed tours (depicted as circles) and exchange data with robots on neighboring tours.}
	\label{fig:scenarios}
\end{figure}

\section{Related work}
\label{sec:relatedwork}

The multi-robot patrolling problem can be divided into the problem of determining paths in the environment and controlling and coordinating the robot movement along these paths. In \cite{Pasqualetti2012b} algorithms for the calculation of minimum idleness partitions for robots on a given chain, tree or cyclic graph are presented. In \cite{Pasqualetti2012a} a tour in the environment containing sensing locations with different priorities is calculated, and a control law that coordinates the robots on that tour is developed with the aim to minimize the weighted idleness. Coordinated patrolling accounting for leaving and joining robots on a linear perimeter with dynamic length is considered in \cite{Kingston2008}. Similarly, in \cite{Acevedo2013_IROS} robots travel along their partition on a linear perimeter and use local coordination with their neighbors to react to changes in perimeter length, number of robots and travel speed. In \cite{Smith2012} a velocity controller for robots following individual tours is developed. The goal is to limit uncertainty, which is growing in the environment at different rates. The problem of finding tours that meet idleness constraints of sensing locations is considered in \cite{Fargeas2013} and periodicity properties of these tours are investigated. In \cite{Nigam2012} the long term goal of minimizing the idleness is converted into a short horizon control law that selects the next sensing location that should be visited by a robot. In \cite{Chang2015} tour planning, dispatching robots on tours and controlling the speed to meet the revisit constraints of points of interest in a wireless sensor network setting with data mules is considered. 

Maintaining connectivity is a prevalent requirement for multi-robot task planning \cite{Mosteo2009}, \cite{Grotli2012_GC}, \cite{Hollinger2012}, \cite{Ponda2012}, \cite{Flushing2017}. Persistent surveillance considering energy constraints and forcing persistent multi-hop connectivity to a base station are considered in \cite{Scherer2016} and \cite{Scherer2017}. The problem of minimizing the coverage time of an area with recurrent connectivity demands is presented in \cite{Anisi2010}. In \cite{Kantaros2017} robots travel back and forth along predefined paths between rendezvous points. A distributed controller determines the meeting times such that recurrent connectivity is guaranteed.

The MILP (mixed integer linear program) formulation and heuristics for the problem of finding a patrolling path for each robot with the goal to minimize the delay is presented in \cite{Banfi2015}. Each robot follows a path containing sensing locations and intermediate detours to communication sites where the data can be transmitted to the base station. A MILP formulation and a heuristic for a similar problem with task revisit constraints is presented in \cite{Manyam2017}. Patrolling considering the propagation of information among the robots is considered in \cite{Acevedo2013_ICUAS}. A decentralized algorithm maintains a grid shaped partition of the area where each robot is traveling a circular path within its subarea. Robots exchange data on the border of its subarea with each robot of the neighboring subareas, which minimizes the propagation time of information in this grid shaped partition. Table \ref{tab:relwork} summarizes the most relevant references for this work. 

\begin{table}
	\centering
	\begin{tabular}{l|c|c|c}
		Reference                & Persist. & Min. Del. & Connect. \\
		\hline
		\cite{Manyam2017}        & X     & X         & S \\
		\cite{Banfi2015}         &       & X         & S \\
		\cite{Flushing2017}      &       & X         & D \\
		\cite{Grotli2012_GC}     &       & o         & D \\
		\cite{Kantaros2017},
		\cite{Kantaros2019},
		\cite{Acevedo2013_ICUAS},
		\cite{Pasqualetti2012b}, 
		\cite{Kingston2008},
		\cite{Acevedo2013_IROS}  & X     &           & D \\
		\cite{Scherer2016},
		\cite{Scherer2017}       & X     &           & F \\
		\cite{Hollinger2012},
		\cite{Anisi2010}         &       &           & R \\
		\cite{Mosteo2009}        &       &           & F \\
		This work                & X     & X         & D
	\end{tabular}
\caption{Summary of related work. An 'X' in the column \textit{Persist.} indicates that the approach generates a solution for an infinite time horizon. An 'X' in \textit{Min. Del.} indicates that the work explicitly considers minimization of the delay and an 'o' that the delay is considered as constraint. The meaning of the letters in the \textit{Connect.} columns are: 'D' for delay tolerant (store-and-forward), 'S' for single hop (robots deliver data directly to the base station), 'R' for recurrent connectivity (all robots meet after a certain interval), and 'F' for full (persistent connectivity among all robots and the base station).}
\label{tab:relwork}
\end{table}


\section{Problem Formulation}
\label{sec:problem}

We assume that the tours are closed and can have different lengths, all points on a tour are sensing locations, and the tours contain predefined meeting points for the exchange of data with robots on neighboring tours. Robots can exchange collected data if they are at the meeting point that connects their tours at the same time. Since tours can have different lengths, it might be necessary for a robot to wait at a meeting point to meet its neighbor. All robots move with the same unit speed in a particular direction (either clockwise or counterclockwise) but every robot can stop at any point on its tour for an arbitrary amount of time.

More precisely, given is a set $V=\{1, \ldots, n\}$ of $n$ tours for $n$ robots. There is a one-to-one mapping between tours and robots, and we will use the same variable to identify a robot as well as the tour which it traverses. Every robot $v~\in~V$ moves along a tour in a particular direction $d_v$ and with unit speed, which is the same for all robots. With each tour a real number $l_v > 0$ is associated, which is the minimum time a robot can traverse the tour completely if there are no intermediate stops. Each point on a tour $v$ is from a set denoted $P(v)$ and has a coordinate in a local one-dimensional coordinate system which is determined by an origin on the tour and the direction of the tour. We assume that a subset of the points on a tour are sensing locations, and denote its set with $P_S(v)$ and $P^S := \bigcup_{v \in V} P_S(v)$. The vertices which contain sensing locations are denoted with $V_S$. The position of a meeting point between tours $w$ and $v$ is specified along the local coordinate system as $p_w^{meet}(v)$ on tour $w$, and as $p_v^{meet}(w)$ on tour $v$. The position of a robot $v$ at a certain instant $t$ on its tour is denoted by $p_v(t)$. Vertex $v_0$ identifies the tour which has a connection to the base station at point $p_{v_0}^{BS}$.

The tours are the vertices in a tour graph $G=(V, E, v_0, l_v, time_v, l_v^d)$, with an edge between two tours in $E$ if they are connected with a meeting point. Actually, a meeting point identifies two different points, one on each of the two tours it connects. The function $time_v(p,q,d):P(v)\times P(v)\times \{cw, ccw\} \rightarrow \mathbb{R}_{\geq 0}$\footnote{We denote the set of positive real numbers with $\mathbb{R}_{\geq 0}$} gives the minimum time for a robot to travel from meeting point $p$ to $q$ on tour $v$ in clockwise or counterclockwise direction $d$ (i.e. the distance between $p$ and $q$ under the unit speed assumption without intermediate stops). The function $l_v^d(p,d):P(v)\times \{cw, ccw\} \rightarrow \mathbb{R}_{\geq 0}$ returns the minimum time a sensing location from $p$ can be reached when moving in clockwise or counterclockwise direction. This function will allow the delay calculation after a robot starts its tour from a particular point $p\in P(v)$. In the following, we use the short notation $G=(V, E)$ for the tour graph. The tour graph is connected such that the data collected by the robots can reach a base station, which is connected to a particular tour.

To visit a sensing location $x \in P_S(r)$ at time $t$, a robot $r$ must be at the position of the sensing location at time $t$, i.e. $p_r(t)=x$. A robot $r_1$ visiting a sensing location $x \in P_S(r_1)$ at time $t$ captures and stores the observation data associated with the tuple $(x,t)$ in its local memory. The data is forwarded by robot $r_1$ to another robot $r_2$ at time $t' \geq t$ if $p_{r_1}(t')=x_1 \in P(r_1)$, $p_{r_2}(t')=x_2 \in P(r_2)$, and $x_1=p_{r_1}^{meet}(r_2)$ and $x_2=p_{r_2}^{meet}(r_1)$. Robot $r_2$ stores the data associated with the tuple $(x,t)$ and forwards it to any other robot it can communicate with at times $t'' \geq t'$. Finally, all data arrives at the base station $x_0 \in X$. We assume that a sensing location $x\in P^S$ is not considered to be visited when the robot stops at $x$ but when it again starts from $x$. In this way it is possible to decrease the delay by deferring the data generation to the latest possible time.

A patrolling strategy $\pi$ is a mapping from instants of time to points in $P_S(r)$ for every robot $r$ and describes when points should be visited by the robots. Two values are associated with a point $x \in P^S$, \emph{instantaneous idleness} and \emph{instantaneous delay}. The first describes the time a point remains unvisited, and the second describes the time between the capturing of observation data and its earliest arrival at the base station. The definition of the idleness criterion is adopted from \cite{Lauri2014} and is extended by the delay criterion. 

\begin{definition}[Instantaneous idleness, instantaneous worst idleness, worst idleness criterion \cite{Lauri2014}]
If the robots follow a strategy $\pi$, the instantaneous idleness $I_t^{\pi}(x) \in \mathbb{R}_{\geq 0}$ at time $t$ of point $x \in P^S$ is the elapsed duration since the last visit of $x$ by any robot. By convention, at initial time, $I_0^{\pi}(x) = 0$, for any strategy $\pi$ and any $x \in P^S$. The worst idleness criterion $WI^{\pi}$ is defined as 
\begin{equation}
	WI^{\pi}:=\limsup_{t \rightarrow +\infty}{WI_t^{\pi}},
\end{equation}
where $WI_t^{\pi} := \max_{x \in p_s}{I_t^{\pi}(x)}$ is the instantaneous worst idleness.
\end{definition}
Note that the definition of the instantaneous idleness considers the situation when a robot waits at a certain location, its instantaneous idleness stays zero as long as the robot is at that position.

\begin{definition}[Instantaneous visit delay, instantaneous delay, instantaneous worst delay, worst delay criterion]
If the robots follow a strategy $\pi$, the instantaneous visit delay $D_t^{\pi}(x, t', t'')$ at time $t$ of point $x \in P_s$ is the elapsed duration since a visit of any robot at point $x$ that happened at time $t'$ before the data arrives at the base station at time $t''$:
\begin{equation}
	D_t^{\pi}(x, t', t'') :=
		\begin{cases}
			t-t', & \text{if } t' \leq t \leq t''\\
			0, 		& \text{otherwise}
		\end{cases}
\end{equation}
The instantaneous delay $D_t^{\pi}(x)$ of a point $x \in p_s$ is defined as $D_t^{\pi}(x) := \max_{t' \in T^V_x}{D_t^{\pi}(x, t', \min{T^R_{(x,t')}})}$, where $T^V_x$ is the set of points in time a visit at $x$ happens, and $T^R_{(x,t')}$ is the set of points in time the data associated with the tuple $(x,t')$ arrives at the base station. The worst delay criterion $WD^{\pi}$ is defined as 
\begin{equation}
	WD^{\pi}:=\limsup_{t \rightarrow +\infty}{WD_t^{\pi}},
\end{equation}
where $WD_t^{\pi} := \max_{x \in p_s}{D_t^{\pi}(x)}$ is the instantaneous worst delay.
\end{definition}

With this notation we define the MDT problem (and similar its extensions) as optimization problem
\begin{align}
\label{eq:opt_mdtd}
&\min_{\pi}{WD^{\pi}} \\
\text{s.t. } & WI^{\pi} \leq \max_{r \in R}{\{l_r\}} \\
& \pi \in \Pi_{MDT},
\end{align}
where $\Pi_{MDT}$ is the set of all solutions for the MDT instance, and $l_r$ is the minimal time for robot $r$ to completely traverse its tour. We will show that there exists a feasible solution for every instance of the MDT problem.

\section{Scheduling of robots on tours}
\label{sec:mdt}

In this section we discuss the problem of coordinating robots on predefined tours such that the idleness is bounded to the lowest possible value and the delay is minimized. Coordination comprises selecting the data exchange points where robots should meet and defining a travel direction for each robot on its tour. Selecting meeting points and directions determines the route of the captured data from a sensing location to the base station and has an effect on the delay. We will introduce the basic structure which describes the problem, the \textit{tour graph}. We will consider different variations of this problem: (i) selecting the directions when there is a minimal number of meeting points given, i.e. the tour graph is a tour tree, (ii) selecting a minimal number of meeting points when the directions are given, i.e. selecting a tour tree in a tour graph, (iii) selecting a minimal number of meeting points as well as directions, and (iv) selecting unique meeting points between tours, i.e. selecting a tree in a tour multi-graph. We will show that the latter three problems are NP-hard.

Figure~\ref{fig:example_simple_overview} shows an example of a tour graph which is a tree in this particular case (with $|V|=n=7$). If, like in this example, the tour graph is a tree and the directions are given, the path of data from an origin to the base station can be easily reconstructed. Assume that robot $1$ and robot $3$ have just met and robot $3$ has sent its collected data to robot $1$. After that, robot $1$ meets robot $2$, and robot $2$ continues collecting data as it moves along its tour to the meeting point with robot $1$ again. Here, after some time it meets robot $1$ again (possibly it has to wait for robot $1$) and sends the new data to robot $1$ which travels along its tour to the meeting point with robot $3$. Robot $3$ receives the data and moves to the meeting point with robot $5$. At the meeting point with robot $5$, it sends its own data and data received from robot $1$ (and robot $4$) to robot $5$. Finally, robot $5$ sends its own data and all data it received from robot $6$ and $3$ to the base station.

\begin{figure}
	\centering
	\begin{tabular}{cc}
	\subfloat[]{
		\includegraphics[scale=0.26]{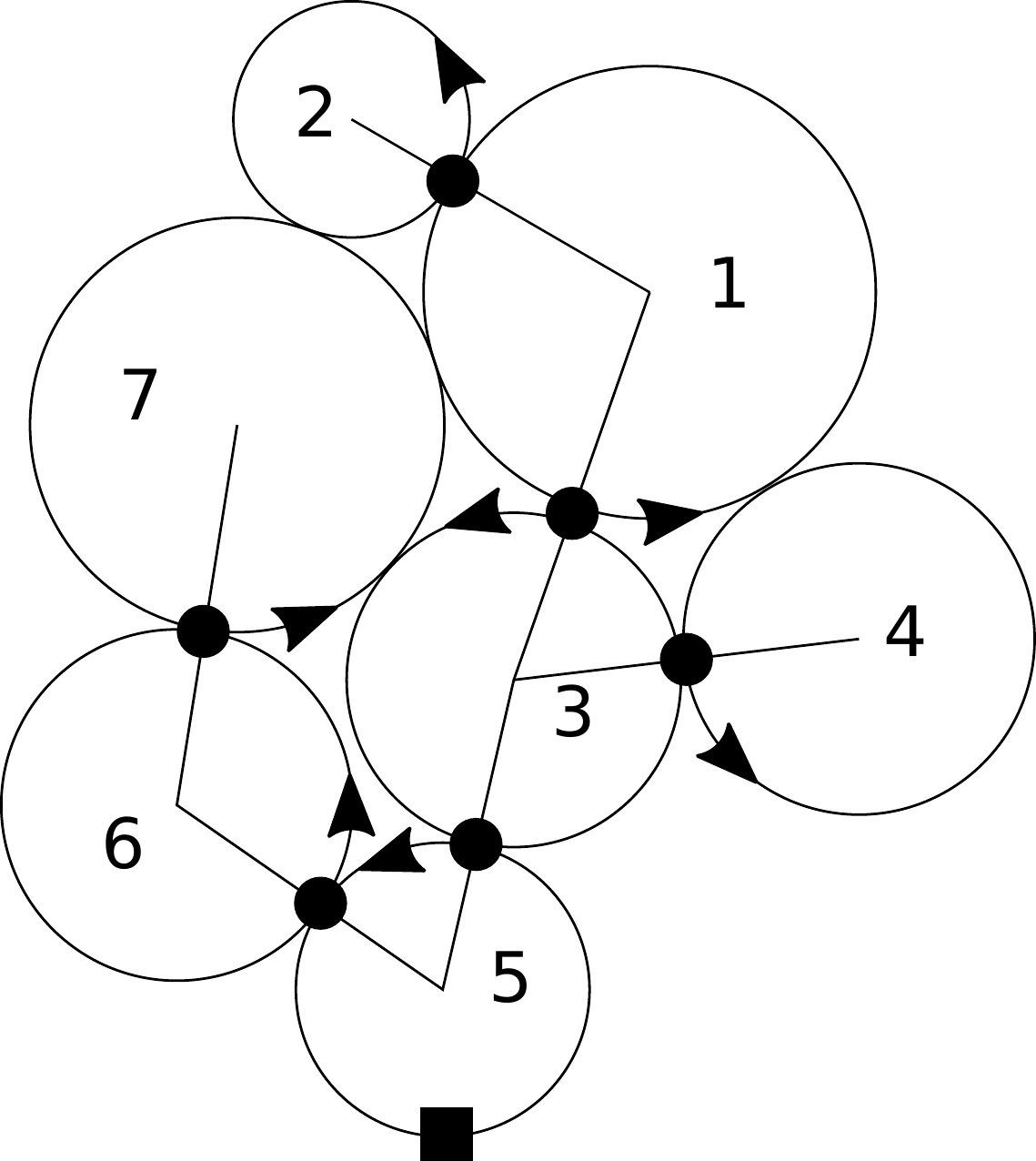}
		\label{fig:example_simple_overview}
	}
	&
	\subfloat[]{
		\includegraphics[scale=0.36]{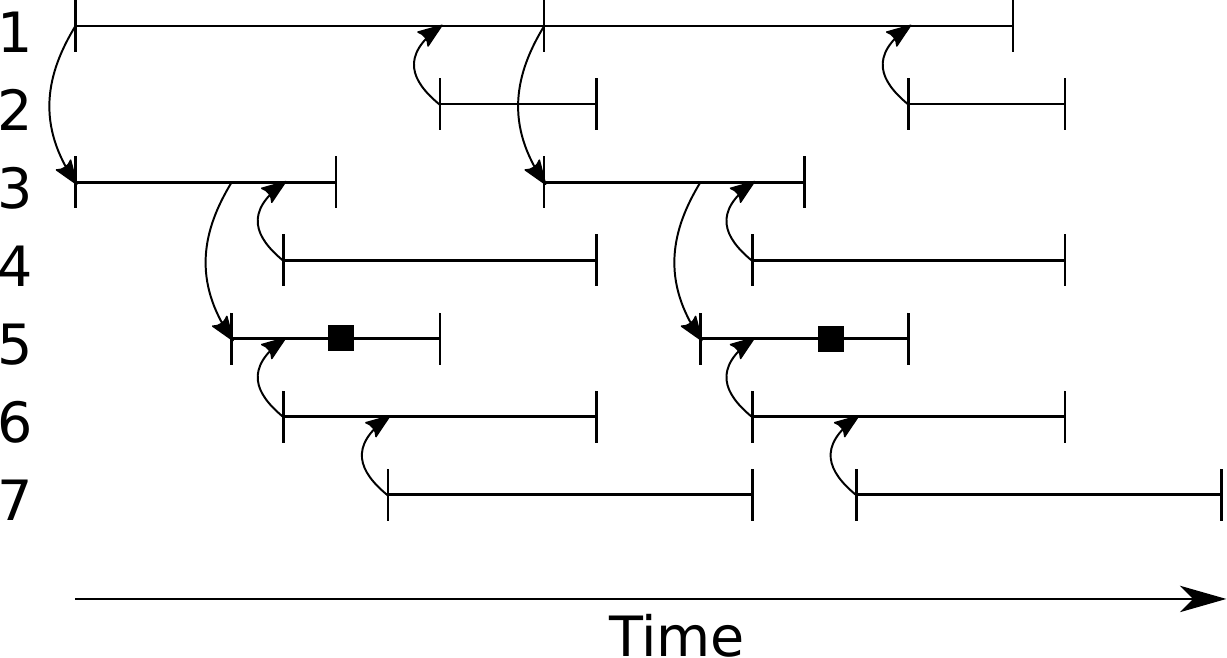}
		\label{fig:example_simple_timing}
	}
	\end{tabular}
	\caption{(a) Example of a tour graph which is a tree with $|V|=n=7$ and $P(v)=P_S(v)$ for all $v \in V$. The tours are depicted as circles, labeled by numbers, and furnished with arrows that show the movement direction of the robots. The solid small circles are the meeting points and the small solid rectangle is the base station. The straight lines indicate the edges of the tour graph. (b) A repeated schedule constructed from the graph in (a). A horizontal line denotes movement of a robot and the spacing between two horizontal lines indicates that a robot does not move. A vertical, curved arrows indicates that two robots meet and exchange data. The directions of the arrows indicate the  pathes the captured data travels towards the base station. The two solid rectangles show the position of the base station on tour 5.}
	\label{fig:example_simple_sched}
\end{figure}

\subsection{Selecting directions}
\label{subsec:dir}

We consider the situation where the tour graph forms a tree $T=(V, E)$ (the problem of selecting a tour tree in a tour graph is described in subsequent subsections), i.e. there is a minimal number of meetings points such that the tour graph is connected and the data from each sensing location can travel to the base station. We will first define a structure called \emph{schedule}, which contains the information for coordinating the robots on their tours. This information contains the start position of a robot on its tour, the direction of traversal, and the positions where a robot should stop and how long it should wait at a particular position.

\begin{definition}[Schedule]
Given a tour tree $T=(V, E)$, a schedule $\pi=\{\pi_1, \pi_2, $\allowbreak$\ldots, \pi_n\}$ is a set of tuples with $\pi_v=(p_v^{start}, d_v, wait_v)$. In a schedule a robot starts and stops at $p_v^{start}$, i.e., $p_v^{start}=p_v(0)=p_v(\tau_v)$, with $\tau_v:=l_v+\sum_{p\in P(v)} wait_v(p)$. The function $wait_v:P(v)\rightarrow \mathbb{R}_{\geq 0}$ defines the waiting times for points on tour $v$ (the waiting positions are the positions where the function returns values $> 0$). $d_v\in\{cw, ccw\}$ are the traversal directions (clockwise and counterclockwise). Additionally, $p_v(t)=p_v^{start} \; \forall t<0$ and $t>\tau_v$. $wait_v(p_v^{start})$ is the initial waiting time (possibly 0) starting at time 0. Every robot meets its neighbor, i.e., $\forall [v,w]\in E: \exists t \leq \tau: p_v(t)=p_v^{meet}(w)$ and $p_w(t)=p_w^{meet}(v)$, with $\tau:=\max_{i\in V}\{\tau_i\}$.
\end{definition}

The requirement that every robot meets its neighbors defined by the edges of the tree $T$ ensures that all collected data reach the base station (at least if the schedule is repeated). We define a \emph{repeated schedule} as an infinite horizon patrolling strategy $\pi^+$ that can be constructed from a schedule:

\begin{definition}[Repeated schedule]
A repeated schedule $\pi^+$ is a repetition of a schedule defined by $\pi^+=(\pi, \bar{v}, \gamma)$, where $\pi$ is a schedule, $\bar{v} \in V$, $\gamma \in \mathbb{R}_{\geq 0}$, such that
\begin{equation}
\label{eq:rep_sched}
\forall w\in V: \tau_w \leq \tau_{\bar{v}} + \Delta_{w\bar{v}} + \gamma,
\end{equation}
with $\Delta_{w\bar{v}}:=wait_w(p_w^{start})-wait_{\bar{v}}(p_{\bar{v}}^{start})$.
\end{definition}

The ``spacing'' between two repetitions of a schedule is defined by a robot $\bar{v}$ and $\gamma$, the time between the schedules of robot $\bar{v}$. Basically, inequality (\ref{eq:rep_sched}) states that each robot has to finish its tour before it can start again in the following repetition of the schedule. Obviously, the worst idleness $WI^{\pi^+} \geq L := \max_{v\in V}\{l_v\}$, the length of the largest tour traversed by a robot.

Figure~\ref{fig:example_simple_timing} shows two repetitions of a schedule of the tour tree in Figure~\ref{fig:example_simple_overview}. This repeated schedule can be defined by $(\pi, 1, 0)$ for example. Robot $1$ and $3$ start at their meeting point (described by the undirected edge $[1,3]$ in the tour graph) at the same time and robot $1$ moves without intermediate stops, whereas robot $3$ has to wait for robot $1$ when it finished its tour. As robot $1$ moves on its tour, it meets robot $2$, which is waiting for robot $1$ at the meeting point $[1,2]$. Robot $2$ starts to move, finishes its tour, and waits for robot $1$ to meet again at the meeting point. Note that the minimum worst idleness schedule in Figure~\ref{fig:example_simple_timing} does not necessarily minimize the worst delay. For example, when robot $2$ finished its tour, it has to wait for robot $1$ to transmit the captured data to it, which imposes a delay on the data of robot $2$ (robot $2$ could have postponed its start such that no new data is captured for a certain amount of time after it transmitted its data to robot~$1$).

\begin{proposition}
\label{prop:repsched}
A repeated schedule $\pi^+=(\pi, \bar{v}, 0)$, with $\bar{v} := \argmax_{v \in V}\{l_v\}$, can be constructed from any schedule $\pi$ with no intermediate waiting times, i.e. $wait_v(p) = 0, \forall p \in P(v)\setminus\{p_v^{start}\}$, $v \in V$. The worst idleness $WI^{\pi^+} = L$.	
\end{proposition}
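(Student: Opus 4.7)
The proof strategy is to verify the repeated-schedule inequality algebraically, and then to compute the worst idleness directly from the resulting period. Under the no-intermediate-wait hypothesis, $\tau_v = l_v + wait_v(p_v^{start})$ for every $v \in V$. Substituting this together with $\Delta_{w\bar v} = wait_w(p_w^{start}) - wait_{\bar v}(p_{\bar v}^{start})$ into~(\ref{eq:rep_sched}) at $\gamma = 0$ causes the initial-wait terms to cancel, reducing the required inequality to $l_w \le l_{\bar v}$. This holds for every $w \in V$ by the choice $\bar v = \argmax_{v}\{l_v\}$, so $\pi^+ = (\pi, \bar v, 0)$ is a valid repeated schedule.

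For the worst idleness, the lower bound $WI^{\pi^+} \ge L$ is already recorded just below the definition of a repeated schedule, so only the matching upper bound remains. The plan is to normalise $\pi$ so that $wait_{\bar v}(p_{\bar v}^{start}) = 0$: shifting every robot's clock backwards by $c := wait_{\bar v}(p_{\bar v}^{start})$, subtracting $c$ from each initial wait of size at least $c$, and for each remaining robot $w$ redefining $p_w^{start}$ to be the point $w$ occupies at the new time $0$ with zero initial wait, produces an equivalent schedule that describes the same continuous trajectories and the same meeting events. After this normalisation $\tau_{\bar v} = L$, so the period of $\pi^+$ equals $\tau_{\bar v} + \gamma = L$. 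Because $\pi$ has no intermediate waits, every sensing point on any tour is visited exactly once per execution of $\pi$ at a fixed offset within the iteration, so consecutive visits of any $x \in P^S$ in $\pi^+$ are spaced exactly $L$ apart. This yields $WI^{\pi^+} \le L$, matching the lower bound.

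The main delicate step is the time-shift normalisation: after redefining $p_w^{start}$ for those robots whose original initial wait was smaller than $c$, one has to verify that the resulting tour is still closed and that every scheduled meeting reappears at the corresponding shifted time. This is nothing more than a re-parametrisation of the same infinite-horizon trajectory, but it is the single non-algebraic part of the proof; everything else reduces to substitution into the definitions of $\tau_v$, $\Delta_{w\bar v}$, and~(\ref{eq:rep_sched}).
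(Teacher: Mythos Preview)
Your algebraic verification of~(\ref{eq:rep_sched}) is exactly the paper's argument: with no intermediate waits one has $\tau_v = l_v + wait_v(p_v^{start})$, the initial-wait terms cancel, and the inequality reduces to $l_w \le L$.

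For the idleness bound the paper is considerably more direct and avoids your normalisation entirely. It just records two facts: (i) in every repetition the difference between the movement-start time of $\bar v$ and that of any $w$ is the constant $\Delta_{w\bar v}$; (ii) consecutive movement starts of $\bar v$ are exactly $L$ apart, because after traversing its tour of length $l_{\bar v}=L$ the robot waits only $\gamma=0$ before moving again. Together these make every robot's motion $L$-periodic, and since there are no intermediate stops each sensing point is revisited every $L$ time units, whence $WI^{\pi^+}=L$. The relevant period is thus $l_{\bar v}+\gamma$, not $\tau_{\bar v}+\gamma$; the initial wait of $\bar v$ only shifts the very first iteration and never enters the period, so no normalisation is needed to see this.

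Your normalisation, by contrast, is not merely a re-parametrisation of the same infinite-horizon trajectory. Once you change $p_w^{start}$ to the point $w$ occupies at the shifted time~$0$, robot $w$ will in all later repetitions wait at that \emph{new} point rather than the original one, so the repeated schedule built from the normalised $\pi'$ differs from $\pi^+$ as a trajectory (different waiting locations and intervals). Both still have period $L$ and hence $WI=L$, but to conclude this for $\pi^+$ itself you end up needing the direct periodicity observation above anyway; the detour through $\pi'$ buys nothing and introduces exactly the subtlety you flag as delicate.
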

\begin{proof}
Since every robot follows its tour without intermediate stops and meets all neighbors, after a finite time $max_{v\in V}\{l_v+wait_v(p_v^{start})\}$, all robots have returned to the starting position. A repeated schedule must fulfill the inequality $\tau_w \leq \tau_{\bar{v}} + \Delta_{w\bar{v}} + \gamma$, $\forall w \in V$, which can be rewritten as $wait_w(p_w^{start}) + l_w \leq wait_{\bar{v}}(p_{\bar{v}}^{start}) + L + wait_w(p_w^{start}) - wait_{\bar{v}}(p_{\bar{v}}^{start})$ (which results in $l_w \leq L$).

Since the difference between the start times at the starting positions between $\bar{v}$ and any $w \in V$ in each repetition of the schedule $\pi$ is $\Delta_{w\bar{v}}$, and the difference between consecutive start times of $\bar{v}$ is $L$, also $WI^{\pi^+} = L$.
\end{proof}

Restricting the waiting times to the meeting point of $v$ with its parent to point $p_v^{start}$ has no negative impact on the delay since waiting at any other position on the tour cannot decrease the delay when the tour tree is given. Moreover, the data generation is deferred if $p_v^{start} \in P_S(v)$ due to the assumption described in Section~\ref{sec:problem}.

With a given tour tree, selecting the directions has an impact on the worst delay $WD$. Compared to the schedule with counterclockwise directions in the lower part of Figure~\ref{fig:example_simple_dir}, the schedule with clockwise directions in the upper part results in a lower delay.

Algorithm~\ref{alg:mindelay_sched} determines the schedule with directions of a given tour tree $T=(V, A)$. To identify the direction of an edge, the arc set $A$ is used where each edge from $E$ is directed towards the root node $v_0$, which contains the base station. In Line~\ref{line:min_delay_sched_callrec} the recursive procedure $\Call{rec}{v, u}$ is called. This function returns the maximum delay for a branch of a tree originating at tour $v$, including the path of the data on tour $v$ to its parent $u$ (the parent of $v$ is the unique node $u$ in an edge $(v, u)$). If $v$ is a leaf the direction is chosen that leads to smaller delay when robot $v$ starts at $p_v^{start}$. For tour $v$ the procedure tests which direction results in a smaller delay on tour $v$ given the maximum delays of the branches (Line~\ref{line:min_delay_sched_cw} and \ref{line:min_delay_sched_ccw}). The function $time_v(p,q,d)$ returns the time it takes to travel from point $p$ to point $q$ on a tour $v$ given the traversal direction $d$. Additionally, the procedure calculates the differences in the starting times and stores them in the variables $\Delta_{vw}$ (Line~\ref{line:min_delay_sched_delta}). These values are used to determine the starting times of the robots (Line~\ref{line:min_delay_sched_calcsched}).

The starting point $p_w^{start}$ of a robot $w$ is set to the meeting point $p_w^{meet}(v)$ with its successor $v$ in the traversal order (Line~\ref{line:min_delay_sched_start}). The only point where $w$ has to wait is the meeting point with $v$, and the waiting time is the sum of the waiting time of the successor $v$ and the previously calculated value $\Delta_{vw}$ (Line~\ref{line:min_delay_sched_calcsched}). This produces a schedule where $w$ is waiting for its successor $v$ and starts moving as soon it has met $v$, and follows the whole tour without intermediate stops. Finally, the wait times are shifted to be positive.

\begin{figure}
	\centering
	\begin{tabular}{cc}
		\subfloat[]{
			\label{fig:example_simple_cw}
			\includegraphics[scale=0.3]{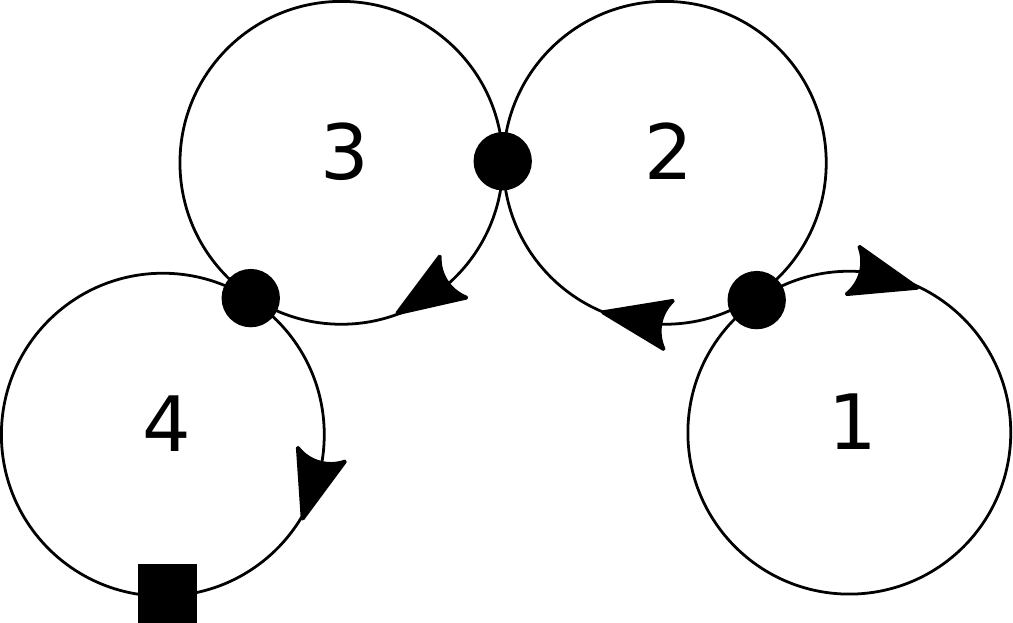}
		}
		&
		\subfloat[]{
			\label{fig:example_simple_cw_timing}
			\includegraphics[scale=0.43]{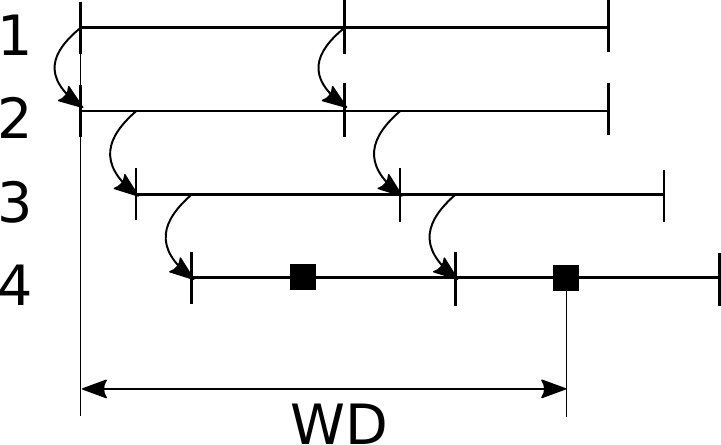}
		}
		\\
		\subfloat[]{
			\label{fig:example_simple_ccw}
			\includegraphics[scale=0.3]{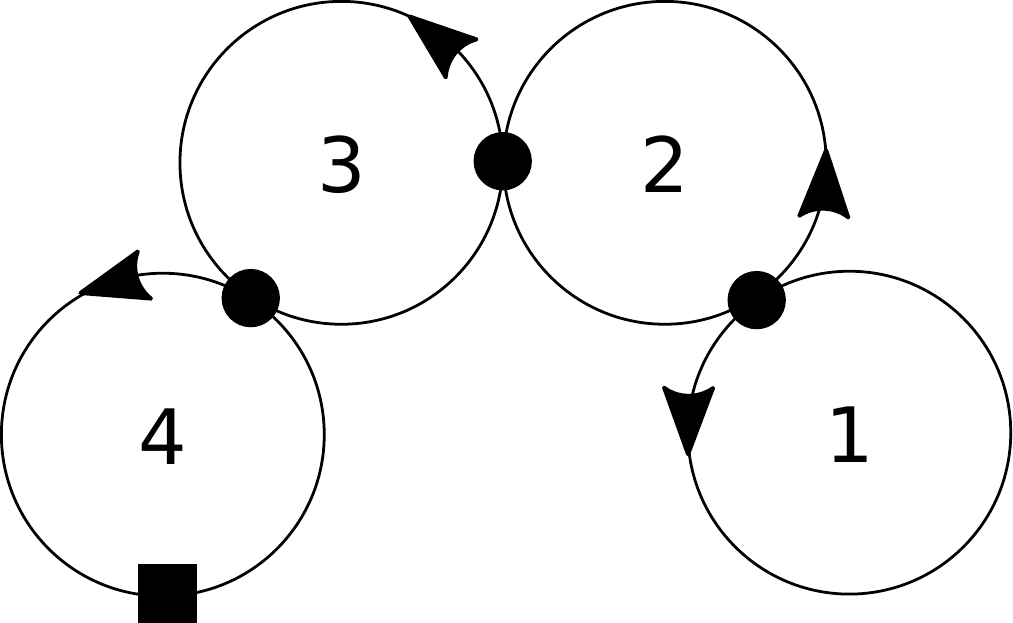}
		}
		&
		\subfloat[]{
			\label{fig:example_simple_ccw_timing}
			\includegraphics[scale=0.43]{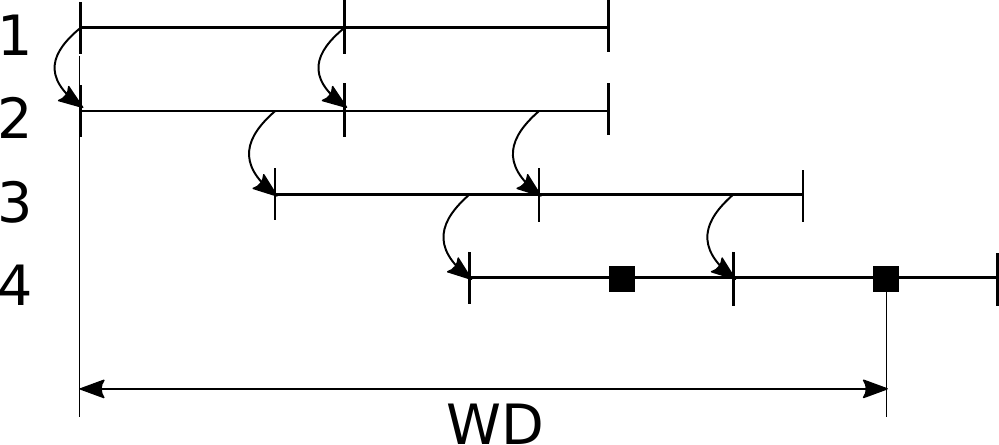}
		}
	\end{tabular}
	\caption{Tour tree with $P(v)=P_S(v)$ for all $v \in V$ where all robots move in clockwise (a) or in counterclockwise (c) directions. The worst delay for clockwise directions is smaller (b) than for counterclockwise directions (d).}
	\label{fig:example_simple_dir}
\end{figure}

\begin{algorithm}
	\caption{Minimum delay schedule}
	\label{alg:mindelay_sched}
	\small
	\begin{algorithmic}[1] 
		\Require
			\Statex \begin{flushleft} Tour tree $T=(V,A)$, base station vertex $v_0$, position of the base station (on tour $v_0$) $p_{v_0}^{BS}$, meeting positions $p_v^{meet}(\cdot)$, time of tours traversal $l_v$ $\forall v~\in~V$\end{flushleft}
		\Ensure
			\Statex \begin{flushleft} $wait_v(\cdot)$, start positions $p_v^{start}$, directions $d_v$ $\forall~v\in~V$ \end{flushleft}
		\State $\Call{rec}{v_0, null}$\label{line:min_delay_sched_callrec}
		\State $p_{v_0}^{start} \gets p_{v_0}^{BS}$
		\State $wait_{v_0}(p_{v_0}^{start}) \gets 0$
		\State $Q \gets <>$ \qquad //empty queue (LIFO)
		\State $Q.push(v_0)$
		\While{$Q \ne <>$}\label{line:min_delay_sched_while}
			\State $v \gets Q.pop()$
			\For{$w \in V$ with $(w,v) \in A$}
				\State $p_w^{start} \gets p_w^{meet}(v)$\label{line:min_delay_sched_start}
				\State $wait_w(p_w^{start}) \gets wait_v(p_v^{start}) + \Delta_{vw}$\label{line:min_delay_sched_calcsched}
				\State $Q.push(w)$
			\EndFor
		\EndWhile
		\State $mw \gets \min_{w}\{wait_w(p_w^{start})\}$
		\State $wait_v(p_v^{start}) \gets wait_v(p_v^{start}) - \min\{0, mw\}$ $\forall v \in V$

		\Statex
		\Procedure{rec}{$v$, $u$}
			\If{$v$ is leaf}
				\State $d_v \gets \argmin_{d \in \{cw, ccw\}}\{l_v - l_v^d(p_v^{start}, d)\}$
				\State return $\min_{d \in \{cw, ccw\}}\{l_v - l_v^d(p_v^{start}, d)\}$
			\EndIf
			\For{$(w,v) \in A$}
				\State $M_w \gets \Call{rec}{w, v}$
			\EndFor
			\State $M^{cw} \gets$ 
			\StatexIndent[2]      $\max_{(w,v) \in A}\{M_w + time_v(p_v^{meet}(w), p_v^{meet}(u), cw)\}$\label{line:min_delay_sched_cw}
			\State $M^{ccw} \gets$ 
			\StatexIndent[2]      $\max_{(w,v) \in A}\{M_w + time_v(p_v^{meet}(w), p_v^{meet}(u), ccw)\}$\label{line:min_delay_sched_ccw}
			\If{$M^{cw} \leq M^{ccw}$}
				\State $d_v \gets cw$
			\Else
				\State $d_v \gets ccw$
			\EndIf
			\For{$(w,v) \in A$}\label{line:min_delay_sched_delta}
				\State $\Delta_{vw} \gets time_v(p_v^{meet}(u), p_v^{meet}(w), d_v) - l_w$
			\EndFor
			\State $M \gets \min\{M^{cw}, M^{ccw}\}$
			\State return $\max\{\min_{d \in \{cw, ccw\}}\{l_v - l_v^d(p_v^{start}, d_v)\}, M\}$
		\EndProcedure
	\end{algorithmic}
\end{algorithm}

\begin{proposition}
Algorithm~\ref{alg:mindelay_sched} produces a schedule $\pi$, from which a repeated schedule $\pi^+=(\pi, \bar{v}, 0)$, with $\bar{v}:=\argmax_{v\in V}\{l_v\}$, can be constructed. The worst idleness $WI^{\pi^+} = L$. Furthermore, $\pi^+$ minimizes the worst delay. 
\end{proposition}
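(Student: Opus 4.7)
The plan is to establish the three conclusions in order: feasibility of the schedule that Algorithm~\ref{alg:mindelay_sched} outputs, the worst-idleness equality $WI^{\pi^+}=L$, and the optimality of the worst delay among all schedules consistent with the given tour tree.

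First I would verify feasibility. Each robot $w$ is assigned start position $p_w^{start}=p_w^{meet}(v)$, where $v$ is its parent; since all nonzero waiting happens at $p_w^{start}$, the identities $p_w(0)=p_w(\tau_w)=p_w^{start}$ hold by construction, and the final shift by $-\min\{0,mw\}$ restores non-negativity of wait times without altering the relative offsets between robots. The meeting condition for every tree edge $[v,w]$ is enforced by $wait_w(p_w^{start})=wait_v(p_v^{start})+\Delta_{vw}$, where $\Delta_{vw}$ is chosen so that $w$ returns to $p_w^{meet}(v)$ exactly when $v$ reaches $p_v^{meet}(w)$ along direction $d_v$. Because the schedule has no intermediate waits, Proposition~\ref{prop:repsched} applies directly and yields a repeated schedule $\pi^+=(\pi,\bar v,0)$ with $WI^{\pi^+}=L$.

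The main content is the worst-delay optimality, which I would prove by induction on the height of the tour tree rooted at $v_0$. For a node $v$ with parent $u$, let $D^\ast(v,u)$ denote the minimum, over all direction assignments and non-negative waits in the subtree $T_v$, of the worst elapsed time between the generation of a data packet in $T_v$ and its hand-off at $p_v^{meet}(u)$. The inductive claim is that $\Call{rec}{v,u}$ returns exactly $D^\ast(v,u)$ and that the corresponding $d_v$ and $\Delta_{vw}$ values realize it. In the base case ($v$ a leaf) only the own-tour term $l_v-l_v^d(p_v^{start},d)$ matters and the algorithm picks the minimizing direction. In the inductive step the contributions of distinct children decouple: for each child $w$ the value $M_w$ returned by recursion is optimal and independent both of $d_v$ and of the choices made in sibling subtrees, while the additional delay accrued on $v$ is the deterministic quantity $time_v(p_v^{meet}(w),p_v^{meet}(u),d_v)$. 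Hence $\min_{d_v}\max_w\{M_w+time_v(\cdot,\cdot,d_v)\}$ captures the optimal children-side contribution, and combining it with the own-tour term via the outer $\max$ reproduces $D^\ast(v,u)$. Applied at $v_0$ with hand-off at $p_{v_0}^{BS}$, the returned value equals $WD^{\pi^+}$, establishing the minimum.

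The delicate step I expect to spend the most care on is showing that the forward pass that sets the $wait_w(p_w^{start})$ from the parent's wait times actually attains the delays predicted by the recursion, rather than forcing additional idling that would inflate them. The point is that $\Delta_{vw}$ is calibrated so that no robot ever waits at an interior point of its tour; together with the fact that $\bar v$ has the largest tour length, so $l_w\leq L$ for every $w$, this guarantees that the final shift by $-\min\{0,mw\}$ contributes only a common offset and preserves every relative phase used by the recursion. The same bound $l_w\leq L$ is what makes $\gamma=0$ compatible with the repetition condition of Proposition~\ref{prop:repsched}, so the single-period analysis extends verbatim to $\pi^+$ and delivers both $WI^{\pi^+}=L$ and worst-delay minimality.
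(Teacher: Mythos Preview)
Your proposal is correct and follows essentially the same approach as the paper: invoke Proposition~\ref{prop:repsched} for the idleness claim, and establish delay optimality by an inductive argument along the recursion $\Call{rec}{v,u}$, showing that the returned value at each node is the minimum worst-case delay from the subtree rooted there to the hand-off point with the parent. The paper's own proof is considerably more terse---it simply asserts that if the $M_w$ are optimal then the combined quantity is optimal---whereas you make the decoupling of sibling subtrees and the role of the forward wait-time pass explicit; this added care is welcome but does not constitute a different route.
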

\begin{proof}
In the loop in Line~\ref{line:min_delay_sched_while} the start times are chosen such that every robot $v$ meets its neighbors $w, \forall (w,v)\in A$ without intermediate stops. The worst idleness of $L$ follows from Proposition~\ref{prop:repsched}.

Now we show that the algorithm produces a schedule with minimum worst delay. Note that because of the chosen starting times, all captured data on a tour of a robot travels to the base station within the same schedule (assuming no repetition). When the minimum worst-case delays $M_w$ to node $v$ towards the base station in a call of $\Call{rec}{v, u}$ are known (which is certainly true if $w$ is a leaf), then $\max\{\min_{d \in \{cw, ccw\}}\{l_v - l_v^d(p_v^{start}, d_v)\}, M\}$ is also the minimum worst-case delay of all data including the data captured by $v$ until meeting position of $v$ with its parent towards the base station.

\end{proof}

\subsection{Selecting a tree in the tour graph}
\label{subsec:mdt}

Now we consider a tour graph $G=(V,E)$ instead of a tour tree. To show that the problem of determining a tree with a minimum delay schedule in a tour graph is NP-hard, we will formulate it as a decision problem \textit{d-MDT} and reduce the NP-complete problem 3SAT\footnote{An instance of 3SAT consists of a set $W$ of Boolean variables, and a set $C$ of clauses where each clause contains exactly three literals. The literals are of the form $x_i$ or $\overline{x_i}$ where $x_i \in W$. The question is, whether there is an assignment of values from $\{True, False\}$ to the variables such that in every clause at least one literal evaluates to $True$.} to it. We will assume that the directions of the tours are given and formulate d-MDT as follows. Given a tour graph with directions and the distances between the meeting points, and a bound $B$, the question is: is there a tree in the tour graph that admits a schedule with worst case delay of at most $B$? The optimization problem MDT cannot be easier than the decision problem, since a solution of the optimization problem also gives an answer to the decision problem. 

The construction of an d-MDT instance from an arbitrary 3SAT instance is shown by means of the example $\{c_1=\{x_1, x_2, x_3\}, c_2=\{\overline{x_1}, \overline{x_2}, x_4\}, c_3=\{x_2, \overline{x_3}, \overline{x_4}\}\}$ in Figure~\ref{fig:example_3sat_mdt}. In the reduction a vertex appears for each variable and each clause, and a meeting point connects a variable $x_i$ with a clause $c_j$ if the variable appears in the clause. The position of the meeting point on $x_i$ depends on whether the variable is complemented or not complemented in the clause. The basic idea is that for each clause $c_j$ an edge $(c_j, x_i)$ has to be selected such the data data from each $c_j$ can pass some $x_i$ with a low additional delay. This selection has the interpretation that the variable $x_i$ makes the clause evaluate to $True$. Since the result has to be a tree, a low additional delay for all clauses results in a satisfying assignment of the 3SAT instance. The details are described in the proof of the following proposition:

\begin{proposition}
\label{prop:mdt_np}
d-MDT is NP-hard.
\end{proposition}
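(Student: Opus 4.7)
The plan is to exhibit a polynomial-time reduction from 3SAT to d-MDT, fleshing out the construction sketched immediately before the proposition. Given a 3SAT formula with variables $W=\{x_1,\ldots,x_m\}$ and clauses $C=\{c_1,\ldots,c_k\}$, I build a tour graph $G=(V,E)$ with one tour vertex $v_{x_i}$ per variable and one tour vertex $v_{c_j}$ per clause, together with a distinguished base station tour $v_0$ adjacent to every variable tour. For every literal occurrence of $x_i$ in $c_j$ I add an edge $[v_{c_j},v_{x_i}]$; its meeting point on the tour of $x_i$ is placed at a designated ``positive'' position $p_i^{+}$ if the literal is $x_i$ and at a ``negative'' position $p_i^{-}$ if it is $\overline{x_i}$. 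Tour directions are given as part of the instance, and the tour lengths together with the local coordinates of $p_i^{+}$, $p_i^{-}$, and the base-station-side meeting point are tuned so that the time for a packet handed to $v_{x_i}$ at a single cluster (either all $+$ or all $-$) to reach $v_0$ is small, while a packet that must visit both clusters before exiting $v_{x_i}$ essentially traverses the whole tour. The bound $B$ is then set between these two values, and clause tours are made uniformly short so their own contribution is negligible.

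The correctness argument proceeds in two directions via the interpretation that selecting an edge $[v_{c_j},v_{x_i}]$ in the chosen spanning tree corresponds to declaring $x_i$ ``responsible'' for satisfying $c_j$. Since every clause vertex must be attached to the tree by at least one edge (and, by the shortness of clause tours together with the tree property, can be attached by at most one without creating cycles through a variable), a tree induces a choice of one literal per clause. Reading off the assignment $\alpha(x_i)=\mathrm{True}$ iff some tree-edge at $v_{x_i}$ lands at $p_i^{+}$, I show: (i) if the formula is satisfiable, choose for each clause an edge to a variable whose assignment satisfies it; then the selected meeting points on every $v_{x_i}$ lie on a single side, so Algorithm~\ref{alg:mindelay_sched} yields a schedule with worst delay at most $B$; (ii) conversely, any tree admitting worst delay at most $B$ must by construction avoid mixing $+$ and $-$ meeting points on any variable tour, so the induced $\alpha$ is well defined and satisfies every clause through its unique incident tree-edge. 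Since the construction is linear in $|W|+|C|$, this gives NP-hardness of d-MDT, and the optimization version MDT is at least as hard.

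The main obstacle is calibrating the numeric parameters so that the algebraic ``single-side'' property really coincides with the delay bound $B$ produced by the recursion in Algorithm~\ref{alg:mindelay_sched}. Concretely, I need to pick $l_{v_{x_i}}$, place $p_i^{+}$ and $p_i^{-}$ essentially diametrically opposite with the base-station exit adjacent to one of them, and fix the given direction so that $time_{v_{x_i}}(p_i^{+},p_{v_{x_i}}^{meet}(v_0),d_{v_{x_i}})$ and $time_{v_{x_i}}(p_i^{-},p_{v_{x_i}}^{meet}(v_0),d_{v_{x_i}})$ together bracket $B$ in the required way, while any path from the ``wrong'' cluster to the exit incurs nearly $l_{v_{x_i}}$. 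A secondary subtlety is the base-station tour: it must have enough meeting points (one per variable) arranged so that its own contribution to the worst branch delay computed by the $M^{cw}/M^{ccw}$ comparison in Algorithm~\ref{alg:mindelay_sched} is the same constant regardless of which variable-side edges are selected, so that the delay discrimination is driven entirely by the variable tours. Once these parameters are fixed, verifying the two directions of the equivalence reduces to straightforward case analysis on the recursion.
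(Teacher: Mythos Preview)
Your reduction has a structural gap that breaks both directions of the equivalence. In d-MDT the direction $d_{v_{x_i}}$ is \emph{fixed}, and in your construction each variable tour has a single exit, the meeting point with $v_0$. Hence $time_{v_{x_i}}(p_i^{+},p_{v_{x_i}}^{meet}(v_0),d_{v_{x_i}})$ and $time_{v_{x_i}}(p_i^{-},p_{v_{x_i}}^{meet}(v_0),d_{v_{x_i}})$ are two fixed constants, and with $p_i^{+}$ and $p_i^{-}$ placed diametrically opposite they are necessarily unequal. There is therefore no threshold $B$ that is met when ``all selected meeting points on $v_{x_i}$ lie on a single side'' for \emph{either} side, yet violated when both sides are used: whichever side is farther from the exit is bad regardless of whether the other side is also used. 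Concretely, if $B$ sits between the two values, then every clause attached at the far side already violates $B$ by itself, so the variable is hard-wired to one truth value and the reduction encodes nothing. Your phrase ``a packet that must visit both clusters before exiting'' suggests a misconception: a packet entering at one meeting point and leaving at the (unique) exit never visits the other cluster; its delay is just the fixed arc length in direction $d_{v_{x_i}}$.

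What the paper does to repair this is to give each variable tour \emph{two} candidate parents, via auxiliary tours $x$ and $\overline{x}$ sitting between the variable layer and the base-station tour $t$. The meeting point with $x$ is colocated with the $p_i^{+}$ cluster and the meeting point with $\overline{x}$ with the $p_i^{-}$ cluster. Now the truth value of $x_i$ is encoded not by which clauses attach, but by which of the two parent edges $(x_i,x)$ or $(x_i,\overline{x})$ the tree selects; this choice moves the exit point and makes the corresponding cluster sit at distance $0$ from it. The remaining work is to argue that any low-delay tree is forced to use both $(t,x)$ and $(t,\overline{x})$, exactly one of $(x_i,x),(x_i,\overline{x})$ per variable, and exactly one clause--variable edge per clause. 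Your single-hub design with $v_0$ adjacent to all variables removes precisely the degree of freedom that carries the assignment; to salvage the approach you need at least two distinct exits per variable tour whose selection is part of the tree choice.
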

\begin{proof}
Given an instance of 3SAT with variables $W=\{x_1, \ldots, x_a\}$, and clauses $C=\{c_1, \ldots, c_b\}$ a tour graph with the following $n=a+b+3$ vertices is constructed:
\begin{itemize}
	\item one vertex for every clause $c_i$
	\item one vertex for every variable $x_j$
	\item two vertices $x$ and $\overline{x}$
	\item a vertex $t$
\end{itemize}
The direction is set arbitrary and can be the same for all the tours, $P(v)=P_S(v)$ for all $v$, and the following meeting points between the tours are introduced:
\begin{itemize}
	\item On every $c_i$ there is a meeting point with every variable $x_j$ which appears as literal in $c_i$. The distances between the meeting points on $c_i$ is $2/3$.
	\item On every $x_i$ there are two meetings points with $x$ and $\overline{x}$ with distance $1$ between them on each side of the tour. The meeting points with the clauses $c_j$, where the variable $x_i$ appears, are grouped such that the distance to the meeting point with $x$ is $0$ and to the meeting point with $\overline{x}$ is $1$ if the variable appears as $x_i$ in $c_j$, and vice versa if the variable appears as $\overline{x_i}$ in $c_j$.
	\item On each of $x$ and $\overline{x}$ there is a meeting point with every variable $x_i$ with distance $0$ between them except for two meeting points, each with distance $1$ to the meeting point with $t$ (such that the distance between them is $2$ on the other side of the tour).
	\item On $t$ there are meeting points with $x$ and $\overline{x}$ with distance $0$ between them on one side of the tour and distance $1$ between each of them and the meeting point with the base station on the other side of the tour (such that the distance between them is $2$ on the other side).
\end{itemize}
The bound $B$ is set to $4$. Given a satisfying assignment for the variables $x_i$, the parent in the tree for a variable $x_i$ is $x$ if the variable is $True$ in the assignment, or $\overline{x}$ if the variable is $False$. The parent of a clause $c_j$ can be any $x_i$ that appears as satisfying literal in the clause. In this way the worst case delay, which is caused by the tours $c_j$, is $4$ (including the length of the tours). Note that the distances do not have to be $0$ and $1$, but sufficiently small and large, respectively. Based on these distances, the bound $B$ has to be set accordingly.

Next, we have to show that a tree with worst case delay of $4$ also determines a satisfying assignment for the 3SAT instance. We will do this by showing that the tree has to have a certain structure. First, both edges from $t$ to $x$ and $\overline{x}$ have to be in the tree. Otherwise, if e.g. $(t,x)$ is not chosen, data from $x$ to $\overline{x}$ has to pass some tour $x_j$ which leads to a delay of $5=2$ (length of tour $x$) + $1$ (on $x_j$) + $1$ (on $\overline{x}$) + $1$ (on $t$). Second, exactly one edge from any $x_j$ to either $x$ or $\overline{x}$ has to be in the tree. Choosing both edges results in a cycle containing $x_j$, $x$, $\overline{x}$, and $t$. If none of these edges is in the tree, the data has to travel along a path from $x_j$ to some $c_i$ to some $x_k$ and then to either $x$ or $\overline{x}$ which causes a delay of at least $4+2/3=2$ (length of tour $x_j$) + $2/3$ (on $c_i$) + $0$ (on $x_k$) + $1$ (on $x$ or $\overline{x}$) + $1$ (on $t$). Finally, for every $c_i$ exactly one edge to some $x_j$ has to be in the tree. Because of the arrangement of the meeting points on the tours $x_j$, choosing the edges for tours $c_i$ and the edges between $x_j$ and $x$ or $\overline{x}$ that admit a worst case delay of $4$, is equivalent to finding a satisfying assignment for the 3SAT instance.
\end{proof}

Consider the example in Figure~\ref{fig:example_3sat_mdt} again with the assignment $x_1=x_2=x_4=True$, and $x_3=False$. The parent of $x_1, x_2$, and $x_4$ is $x$, and the parent of $x_3$ is $\overline{x}$. The parent of $c_1$ can be either $x_1$ or $x_2$.

\begin{figure}
	\centering
	\includegraphics[scale=0.29]{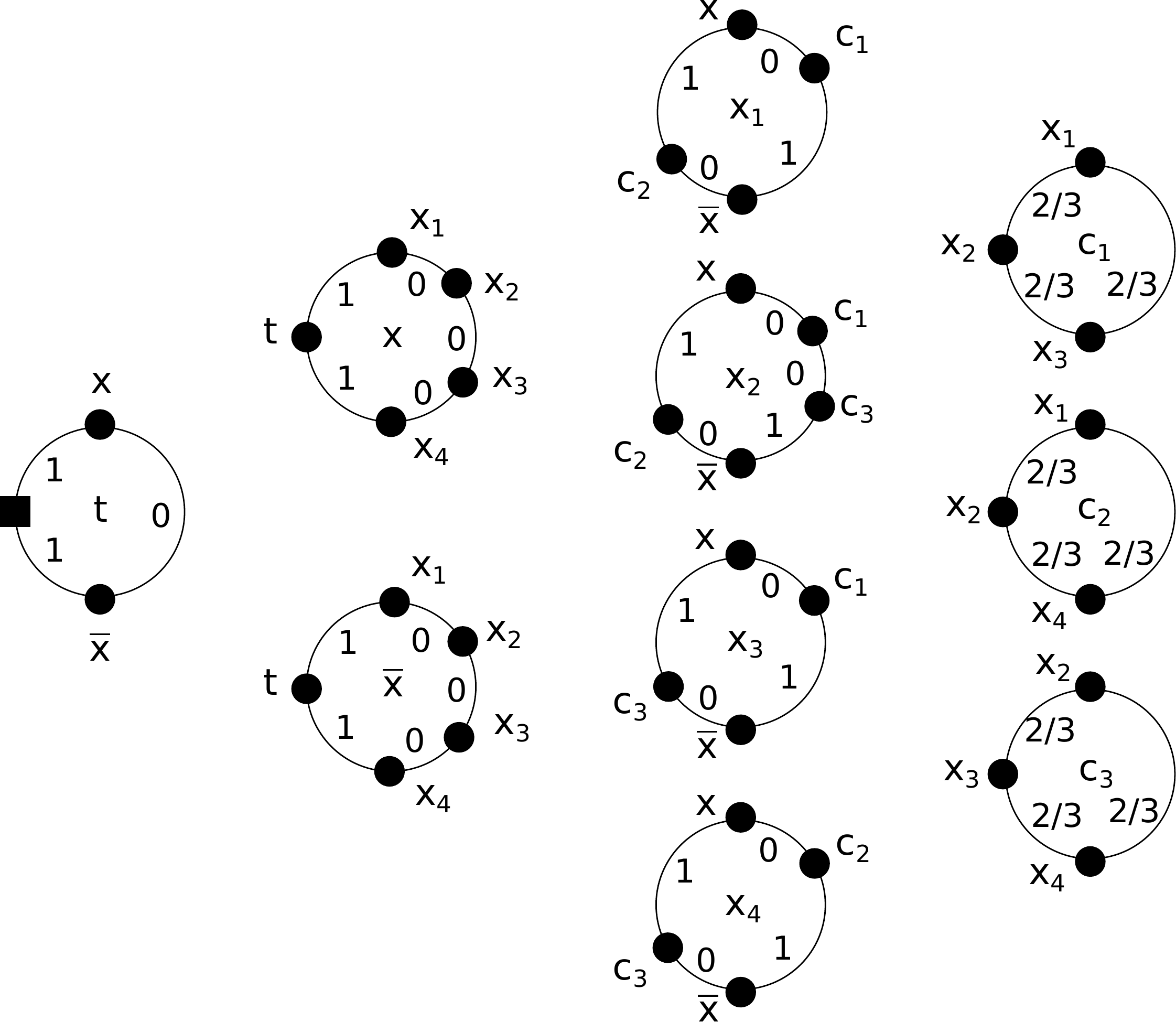}
	\caption{Example of a reduction from the 3SAT instance $\{c_1=\{x_1, x_2, x_3\}, c_2=\{\overline{x_1}, \overline{x_2}, x_4\}, c_3=\{x_2, \overline{x_3}, \overline{x_4}\}\}$ to d-MDT. The circles represent the tours (which do not touch for better readability), the connection between the tours are depicted with the named meeting points, and the direction is ccw for all tours.}
	\label{fig:example_3sat_mdt}
\end{figure}

\subsection{Selecting directions and meeting points}
\label{subsec:mdtd}

The problem minimum delay tree with directions (MDTD) is similar to MDT with the additional problem of finding the directions. We will show that the decision version d-MDTD is also NP-hard and present a heuristic algorithm for the problem.

\begin{proposition}
\label{prop:mdtd_np}
d-MDTD is NP-hard.
\end{proposition}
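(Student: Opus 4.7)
The plan is to reduce from 3SAT, adapting the construction used in the proof of Proposition~\ref{prop:mdt_np}, so that the reduction remains valid when the solving algorithm also chooses the directions of the tours. The idea is to modify the placement of meeting points on each constructed tour so that the travel time between any two meeting points is the same in both clockwise and counterclockwise directions. Once this direction-invariance is secured, neither implication of the equivalence used in Proposition~\ref{prop:mdt_np} can be foiled by a clever direction choice, and the NP-hardness transfers from d-MDT to d-MDTD.

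First I would reuse the vertex set (one tour per clause $c_i$, one per variable $x_j$, auxiliary tours $x$, $\overline{x}$, and base-station tour $t$) together with the same adjacency pattern between tours. Second, I would arrange the meeting points on each tour symmetrically around the tour's midpoint, so that the relevant times used in the delay argument (the ``$0$'' and ``$1$'' distances between consecutive meeting points on $x_j$, $x$, $\overline{x}$, and $t$, and the ``$2/3$'' distances on clause tours) are each realized by a symmetric pair that is direction-invariant. Where the original proof used distance $0$, I would substitute a sufficiently small $\delta > 0$ symmetrically on both sides of the tour, and enlarge the overall delay bound slightly from $B=4$ to some $B'$ to compensate. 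Third, I would re-verify the two implications of Proposition~\ref{prop:mdt_np}: the forward direction (a satisfying 3SAT assignment yields a tree of delay at most $B'$ regardless of direction choices) is immediate from direction-invariance, and the backward direction (any tree-plus-direction pair with delay at most $B'$ must have the tree structure analyzed in that proof) reuses the same case analysis because every lower-bound estimate involved is now direction-independent.

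The main obstacle will be controlling the slack in the ``bad-case'' lower bounds after asymmetric distances have been symmetrized. The original proof exploits sharp distance gaps (e.g., $4$ versus $4 + 2/3$, or $4$ versus $5$) to rule out undesirable tree structures; once ``$0$'' is replaced by $\delta$, these gaps shrink on both sides, and the new bound $B'$ must be chosen so that ``good'' trees still attain $B'$ while ``bad'' trees still strictly exceed it. Choosing $\delta$ sufficiently small relative to the original gap sizes, and recomputing $B'$ accordingly, should preserve the separation and yield the desired reduction.
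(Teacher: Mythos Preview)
Your strategy—neutralizing the direction choice so that the d-MDT reduction carries over verbatim—is a legitimate alternative to the paper's route, but the concrete mechanism you propose does not work. On a closed tour of length $l$, the travel time from $p$ to $q$ is $d$ in one direction and $l-d$ in the other; these coincide only when $p$ and $q$ are antipodal. Hence you cannot ``symmetrize'' a tour carrying three or more distinct meeting-point positions (the clause tours $c_i$ have three, and the tours $x,\overline{x}$ carry many variable meeting points plus the meeting point with $t$) so that \emph{every} pairwise travel time becomes direction-independent. Replacing distance $0$ by a small $\delta$ and perturbing the bound $B$ does nothing to remove this geometric obstruction; the obstacle is not slack in the inequalities but the impossibility of making more than two points on a circle mutually antipodal. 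What would actually rescue your line is not symmetrization but the weaker observation that in the d-MDT gadget all tours except the $c_i$ already have only two effective positions (hence are direction-invariant), and on each $c_i$ the \emph{minimum} of the two directional distances between any two literal meeting points is still $2/3$, which is all the lower-bound arguments need. You never make that argument; instead you assert a symmetrization that cannot exist.

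For comparison, the paper takes the opposite tack: rather than rendering direction irrelevant, it rearranges the meeting points on each variable tour $x_j$ so that the direction of $x_j$ itself \emph{encodes} the truth value (counterclockwise for $x_j=\text{True}$, clockwise for $x_j=\text{False}$). The tree edges to $x$ or $\overline{x}$ are then forced as before, and the case analysis is redone with delay thresholds that hold for the best possible direction choice on every tour. Your approach, if repaired along the lines above, would be slightly more economical (no new gadget), whereas the paper's approach makes explicit how the additional degree of freedom in d-MDTD can be absorbed into the reduction rather than suppressed.
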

\begin{proof}
The proof is similar to the proof of Proposition~\ref{prop:mdt_np}. In addition to selecting a tree, the directions for traversing the tours have to be determined as well. The difference in the reduction is the arrangement of the meeting points on the tours for the variables $x_j$. The distance on the tour between a meeting point $c_i$ and $x$ is 0 if the variable $x_j$ does not appear as complement in clause $c_j$, and the distance between a meeting point $c_j$ and $\overline{x}$ is $0$ if the variable appears as complement. Figure~\ref{fig:example_3sat_mdtd} shows the construction of the reduction for the same example as in Section~\ref{subsec:mdt}.

The direction of the tours except for the tours corresponding to variables can be set arbitrary. If in an assignment a variable $x_j=True$, then the direction of the corresponding tour is counterclockwise, and clockwise otherwise. Therefore, a satisfying assignment admits a tree with worst case delay of $4$.

Again, to show that a tree with worst case delay of $4$ also determines a satisfying assignment for the 3SAT instance, we will show that the tree has to have a certain structure. First, both edges from $t$ to $x$ and $\overline{x}$ have to be in the tree. Otherwise, if e.g. $(t,x)$ is not chosen, the shortest possible path for data from $x$ to $\overline{x}$ has to pass some tour $x_j$ and some $c_i$ and some $x_k$ which leads to a delay of $4+2/3=2$ (length of tour $x$) + $0$ (on $x_j$) + $2/3$ (on $c_i$) + $0$ (on $x_k$) + $1$ (on $\overline{x}$) + $1$ (on $t$). Second, exactly one edge from any $x_j$ to either $x$ or $\overline{x}$ has to be in the tree. Choosing both edges results in a cycle containing $x_j$, $x$, $\overline{x}$, and $t$. If none of these edges are in the tree, the data has to travel along a path from $x_j$ to some $c_i$ to some $x_k$ and then to either $x$ or $\overline{x}$ which causes a delay of at least $4+2/3=2$ (length of tour $x_j$) + $2/3$ (on $c_i$) + $0$ (on $x_k$) + $1$ (on $x$ or $\overline{x}$) + $1$ (on $t$). Finally, for every $c_i$ exactly one edge to some $x_j$ has to be in the tree. To limit the delay for data from tours $c_i$ to $4$, the direction for the tours $x_j$ have to be chosen accordingly. This is only possible if the 3SAT instance has a satisfying assignment.
\end{proof}

\begin{figure}
	\centering
	\includegraphics[scale=0.29]{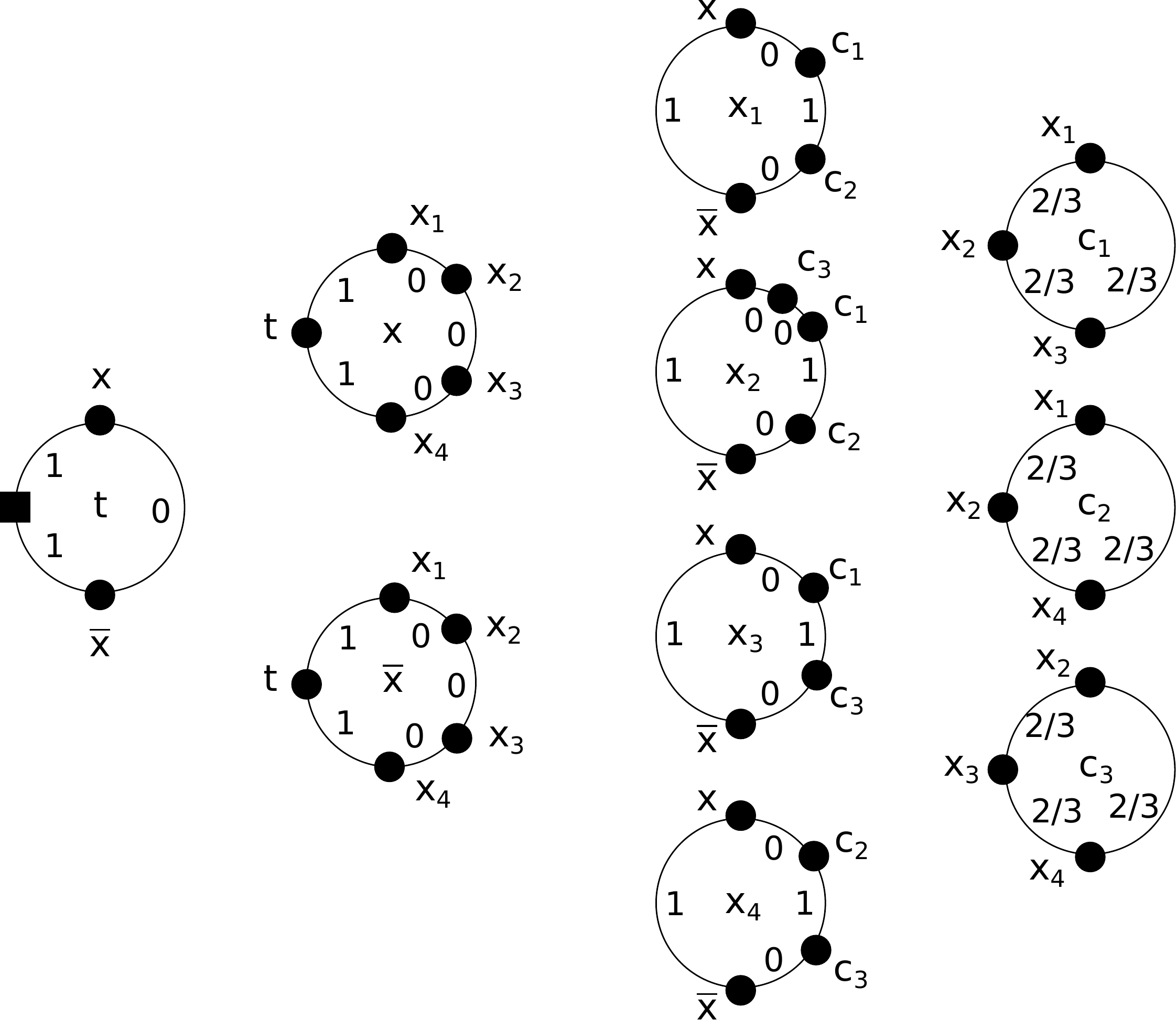}
	\caption{Example of a reduction from the 3SAT instance to d-MDTD (same example as in Figure~\ref{fig:example_3sat_mdt})}
	\label{fig:example_3sat_mdtd}
\end{figure}

\subsection{Selecting unique meeting points}
\label{subsec:mdtdm}

In case there are more than one potential meetings points between two tours, a unique set of meeting points has to be selected to obtain a tour graph (without multiple edges between two vertices). The decision problem d-MDTDM (minimum delay tree with directions and meeting points) is also NP-hard:

\begin{proposition}
d-MDTDM is NP-hard.
\end{proposition}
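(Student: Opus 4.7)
The plan is to reduce from d-MDTD, which has just been shown to be NP-hard in Proposition~\ref{prop:mdtd_np}. The key observation is that d-MDTDM is a strict generalization of d-MDTD: in d-MDTD each pair of tours is connected by at most one meeting point (the tour graph has no parallel edges), whereas in d-MDTDM the input is a tour multi-graph and one must additionally choose, for every pair of tours, which of several candidate meeting points to use before picking a spanning tree and the directions. Therefore the most economical strategy is an identity-style reduction that embeds d-MDTD into d-MDTDM, rather than redoing the 3SAT construction from scratch.

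First I would take an arbitrary d-MDTD instance consisting of a tour graph $G=(V,E)$, tour lengths $l_v$, meeting positions $p_v^{meet}(\cdot)$, the base-station tour $v_0$ and position $p_{v_0}^{BS}$, and a delay bound $B$. From this I would construct the d-MDTDM instance on the same vertex set, with the same tour lengths, the same base-station assignment, and the same delay bound, but now viewed as a multi-graph in which each edge $[v,w]\in E$ is equipped with exactly one candidate meeting point, namely the pair $(p_v^{meet}(w),p_w^{meet}(v))$ from the original instance. This construction is clearly polynomial-time computable (it is essentially the identity).

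Next I would verify equivalence of the two decision questions. In the constructed d-MDTDM instance the "selection of unique meeting points" is forced: for every pair of tours there is only one candidate, so any feasible choice of meeting points coincides with the fixed meeting points of the original d-MDTD instance. Consequently the set of admissible (tree, direction) pairs and their induced worst-case delays are exactly the same in both instances, and the answer to the question "is there a configuration with worst-case delay at most $B$?" agrees. Combining this polynomial-time reduction with Proposition~\ref{prop:mdtd_np} yields NP-hardness of d-MDTDM.

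The only step that requires any care is the formal bookkeeping in the second paragraph, namely making sure that "d-MDTDM restricted to multi-graphs with edge multiplicity one" really is the same problem as d-MDTD, including the way the schedule, directions, and delay objective are defined. I do not expect a genuine obstacle here because no aspect of the d-MDTD formulation from Section~\ref{subsec:mdtd} is altered by the embedding; the extra degree of freedom in d-MDTDM (choosing among several meeting points) is simply silenced by the construction. If one preferred a reduction that exploits the extra freedom, one could instead mimic the 3SAT construction from Proposition~\ref{prop:mdtd_np} and duplicate some meeting points with negligibly perturbed positions, but this would add complexity without strengthening the hardness result.
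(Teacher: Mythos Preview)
Your argument is correct. Since d-MDTDM specializes to d-MDTD when every pair of tours has at most one candidate meeting point, the identity embedding you describe is a valid polynomial-time reduction, and NP-hardness of d-MDTDM follows immediately from Proposition~\ref{prop:mdtd_np}.

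The paper, however, does not take this shortcut. It gives a fresh reduction from 3SAT in the style of Propositions~\ref{prop:mdt_np} and~\ref{prop:mdtd_np}, introducing for each variable $x_i$ an auxiliary tour $x'_i$ connected to $x_i$ by \emph{two} candidate meeting points; the truth assignment is encoded by which of these two meeting points is chosen (together with the directions of $x'_i$ and $x_i$), and a satisfying assignment corresponds exactly to a configuration with delay $17/2$. Your route is shorter and conceptually cleaner, since it exploits the obvious containment of problems and avoids any new gadgetry. The paper's construction, on the other hand, actually exercises the additional degree of freedom of d-MDTDM: it shows that hardness persists in instances where the multi-edge structure is genuinely present and the meeting-point choice carries the combinatorial content of the 3SAT assignment, rather than merely inheriting hardness from the degenerate multiplicity-one case.
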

\begin{proof}
The proof is based on a similar idea as the proofs of Proposition~\ref{prop:mdt_np} and Proposition~\ref{prop:mdtd_np}. Figure~\ref{fig:example_3sat_mdtdm} shows the reduction of the 3SAT instance of the example in Figure~\ref{fig:example_3sat_mdt}. An assignment of 3SAT selects the meeting point between $x'_i$ and $x_i$: if $x_i=True$, the upper meeting point is selected (the directions of $x'_i$ and $x_i$ are counterclockwise), if $x_i=False$, the lower meeting point is selected (the directions of $x'_i$ and $x_i$ are clockwise). A satisfying assignment of 3SAT results in a delay of $17/2$. Selecting a meeting point between tours $x'_i$ and $x_i$ and their directions such that the delay is $17/2$, also determines a satisfying assignment for the 3SAT instance.
\end{proof}

\begin{figure}
	\centering
	\includegraphics[scale=0.29]{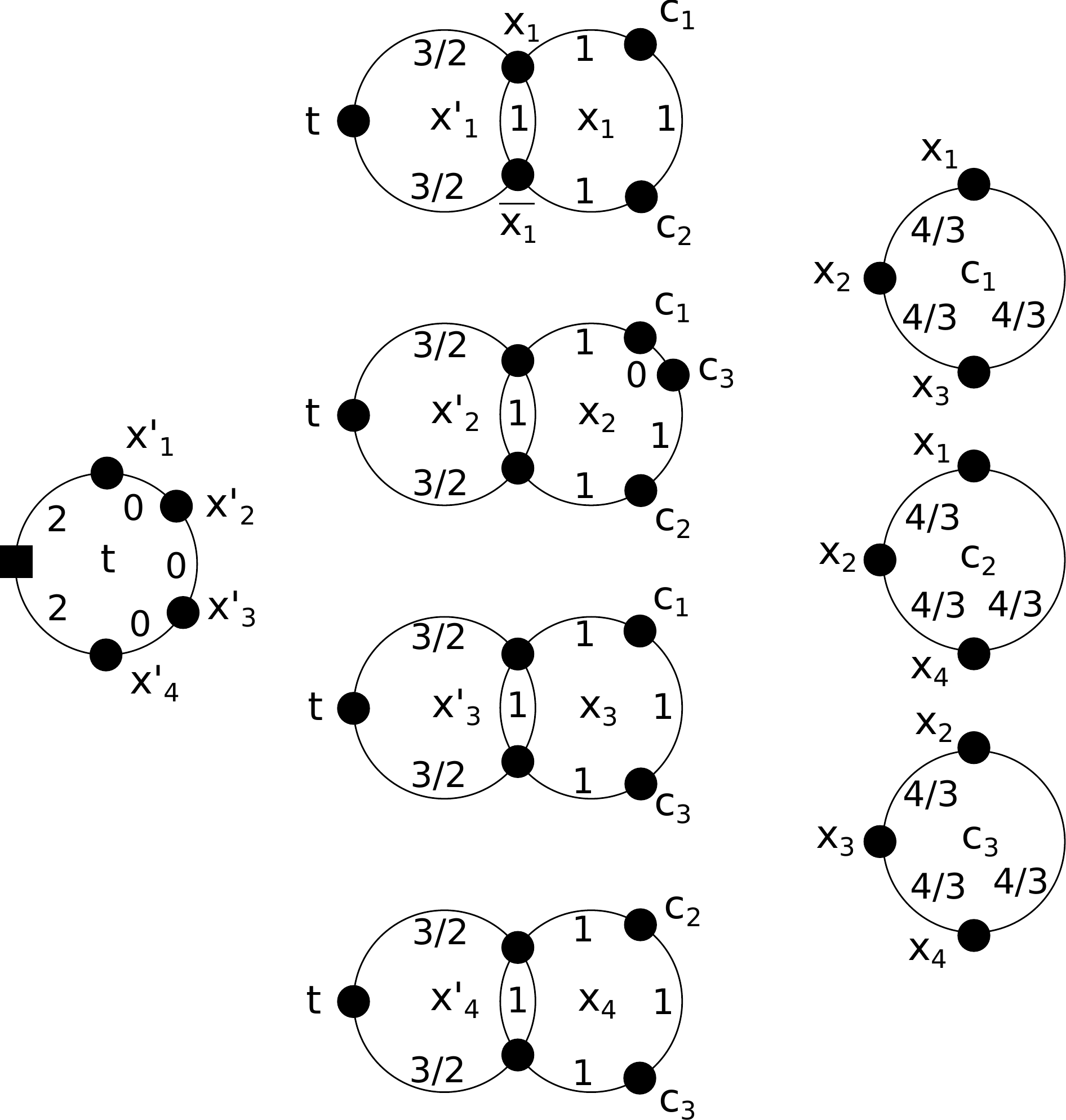}
	\caption{Example of a reduction from the 3SAT instance (same example as in Figure~\ref{fig:example_3sat_mdt}) to the problem of selecting unique meeting points between tours d-MDTDM.}
	\label{fig:example_3sat_mdtdm}
\end{figure}

\subsection{Approximation}

We have shown that the MDTD is already NP-hard when $P(v)=P_S(v)$ for all $v \in V$ and all tours have the same length. The formal definition allows tours containing (arbitrarily large) segments without sensing locations which can be used to derive the result that the problem can not be approximated with a constant factor unless $P=NP$. In Figure~\ref{fig:example_3sat_dg} a direction gadget is shown that is inserted between $x_i$ and $c_j$ if there is an edge in the tour graph (see Figure~\ref{fig:example_3sat_mdtd}). This gadget allows the data to pass in one direction within a delay of 2 but causes a delay of at least $\Gamma$ in the other direction and should prevent that the data from $c_j$ travels along a path on tours $c_j, x_i, c_k$. Additionally, the segments on $x_i$ which are 1 in Figure~\ref{fig:example_3sat_mdtd} get $\Gamma$ and do not contain sensing locations. Then, as before a $WI$ of 6 gives also a solution of the 3SAT instance. Now for every $\alpha$, $\Gamma$ is chosen large enough, e.g. $\Gamma=7 \alpha$, and an $\alpha$-approximation also results in a solution for the 3SAT instance.

A straightforward approximation for the case $P(v)=P_S(v)$ is a breadth first traversal of the tour graph to determine a tour tree which is the union of the shortest paths from each vertex to the base station vertex. Since $L=\max_{v\in V}\{l_v\}$ is a lower bound for the optimal worst delay $WD_{OPT}$, the worst delay $WD_{SP}$ of a breadth first traversal starting from the base station tour cannot be worse than $depth_{SP}(G) \cdot WD_{OPT}$, where $depth_{SP}(G):=\max_{v \in V}\{dist_G(v, v_0)\}$ is the maximum length of all shortest paths in the (unweighted) tour graph from the tours to the base station tour, e.g. $depth_{SP}(G)=3$ which is the length of the path from tour 2 to tour 5 for the example in Figure~\ref{fig:example_simple_sched}.

\begin{figure}
	\centering
	\includegraphics[scale=0.35]{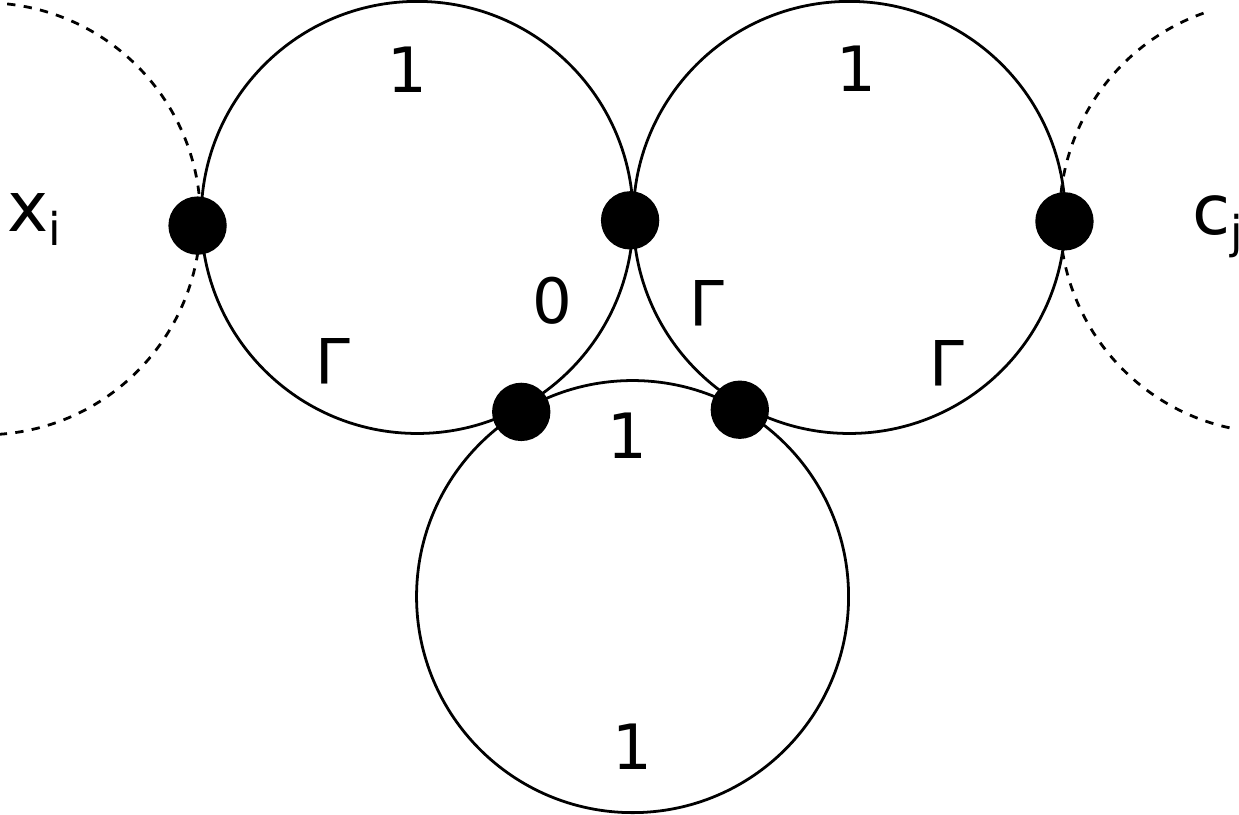}
	\caption{Direction gadget for d-MDTD.}
	\label{fig:example_3sat_dg}
\end{figure}

\section{Heuristics for MDTD}
\label{sec:mdtd_heur}

We present two heuristics for MDTD that select a tree in a tour graph and the directions for the tours. The first algorithm (\mbox{MDTD-SP}) determines a tree from the union of the shortest paths from all vertices to the base station vertex. The rationale behind this idea is to minimize the longest path in the tour graph in terms of the number of tours the generated data passes. This is shown in Algorithm~\ref{alg:mdtd_sp}.

\begin{algorithm}
	\caption{Heuristic for MDTD (MDTD-SP)}
	\label{alg:mdtd_sp}
	\small
	\begin{algorithmic}[1] 
		\Require
			\Statex \begin{flushleft} Tour graph $G=(V, E, v_0, l_v, time_v, l_v^d)$\end{flushleft}
		\Ensure
			\Statex \begin{flushleft} Tour tree $T=(V,A)$, directions $d_v$ $\forall~v\in~V$ \end{flushleft}
		
		\State $p \gets shortest\_path_{G}(v_0)$ /* single source Dijkstra */
		\State $A \gets \cup_{v\in V} \: p_v$
		\State /* Call Algorithm~\ref{alg:mindelay_sched} (returns directions $d_v$ $\forall~v\in~V$): */
		\State $(d_1, \ldots, d_n) \gets \Call{minimum\_delay\_schedule}{G, A}$
	\end{algorithmic}
\end{algorithm}

The second algorithm (\mbox{MDTD-CG}) is shown in Algorithm~\ref{alg:mdtd_cg} and requires a converted graph $G'=(V',E',W)$ with edge lengths $W$ which is constructed from a tour graph $G=(V,E)$. The vertices $V'$ of the converted graph contain the meeting points, i.e., if there is an edge $[k,l]\in E$, then there is a vertex $v_{kl}\in V'$. The length $W$ of the edges $E'$ between vertices in $V'$ are the lengths of the segments of the tours in $V$. An example of a tour graph and its converted graph is shown in Figure~\ref{fig:example_graph}. The idea behind this algorithm is to minimize the longest path that data actually travels on a path to the base station.

The algorithm determines the shortest path from every vertex in $v_{kl}$ (representing a meeting point between tours $k$ and $l$) to the base station $v_{0x}$. The function $dist_{G}(s,d)$ returns the length of the shortest path from vertex $s$ to vertex $d$ in a weighted graph $G$. This path represents a path for the data in the original tour graph for both tours $k$ and $l$ (in $path_k$ and $path_l$) and is stored together with the length of the path in $G'$ (in $len_k$ and $len_l$) if it is shorter than the shortest paths that have already been found for tours $k$ and $l$ (see the loop starting at Line~\ref{line:mdtd_cg_pathfor}). After this, the shortest paths in $G'$ for every tour $v \in V$ have been found. Note that the largest sum of the shortest path plus the tour length $\max_{v\in V}{(len_v + l_v)}$ is a lower bound on the worst delay $WD$.

Next, the branches of the tree $T$ are added to $A$ (loop in Line~\ref{line:mdtd_cg_treefor}). This is shown in Figure~\ref{fig:example_mdtd_heur} by means of the example of Figure~\ref{fig:example_graph}. Assume the longest path from any tour in $V$ starts at vertex $27$ (Figure~\ref{fig:example_mdtd_heur_1_converted}). This path determines a path $(2,7,3,6,5)$ in $G$ (which is added to the tree $T$) and the directions $d_7=d_6=cw, d_3=d_5=ccw$ (Figure~\ref{fig:example_mdtd_heur_1_graph}). In the next step the path starting at $14$ is considered (Figure~\ref{fig:example_mdtd_heur_2_converted}). This path would result in the path $(1,4,3,5)$ in $G$. Since $3$ is already part of the tree, only the branch $(1,4,3)$ is added to the tree (Figure~\ref{fig:example_mdtd_heur_2_graph}), and $d_4=ccw$. All tours are part of the tree $T$ and the algorithm stops. The directions of the leaves $1$ and $2$ are set according the rule for leaves in Algorithm~\ref{alg:mindelay_sched}.

Basically, in Line~\ref{line:mdtd_cg_checklefttour} the algorithm checks if a tour has been left and adds the appropriate arc to the arc set $U$. If a tour is already in the tree $T$, the path loop exits (Line~\ref{line:mdtd_cg_checkexitloop}), and the algorithm continues with the next tour in $V$. The direction of the tour $m$ which has been left depends on the order of meeting points $v_{rs}$ and its successor on $path_i$ on the tour $m$ (Line~\ref{line:mdtd_cg_dir}).

\begin{figure}
	\centering
	\begin{tabular}{cc}
	\subfloat[]{
		\includegraphics[scale=0.35]{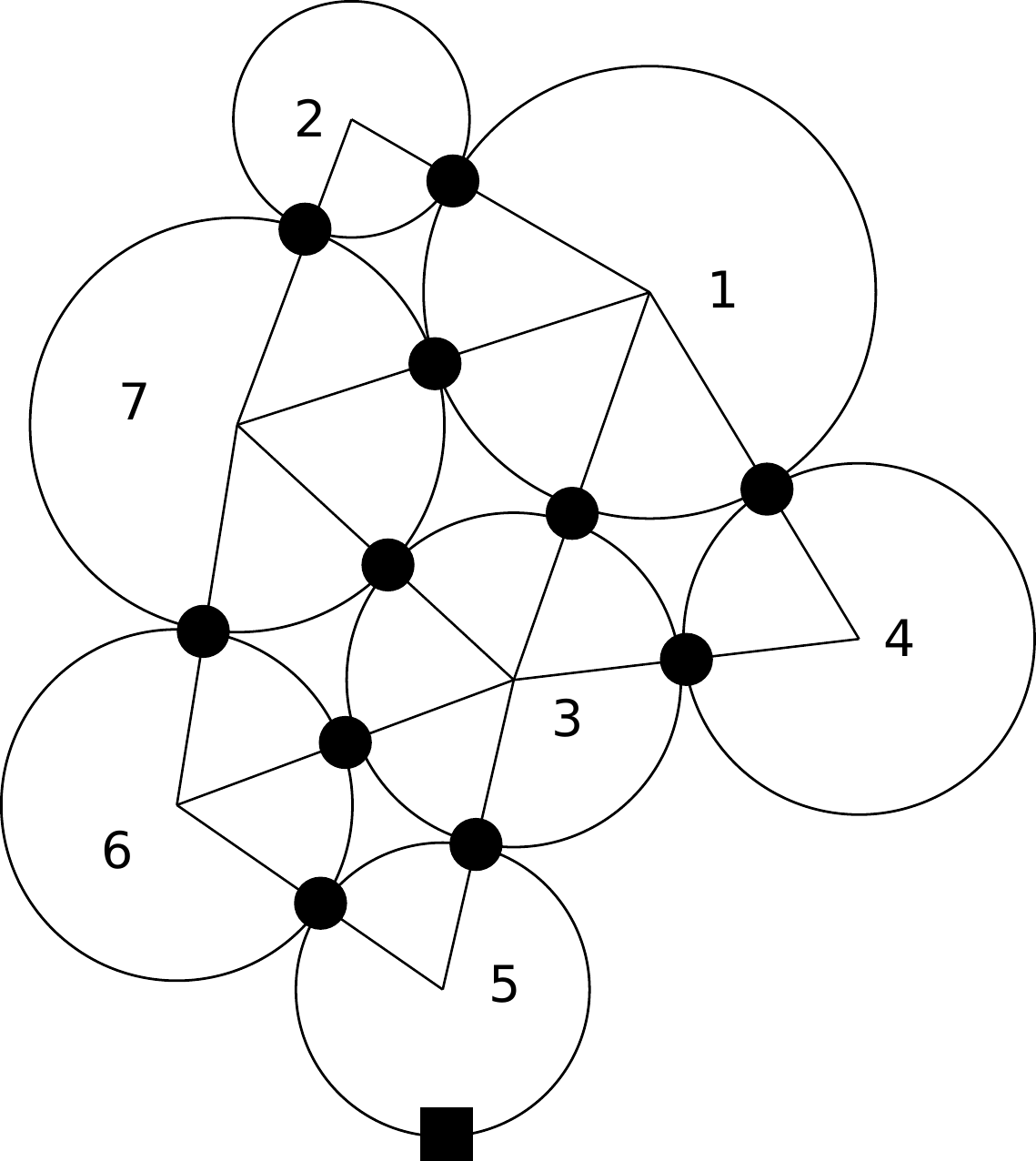}
		\label{fig:example_graph_overview}
	}
	&
	\subfloat[]{
		\includegraphics[scale=0.36]{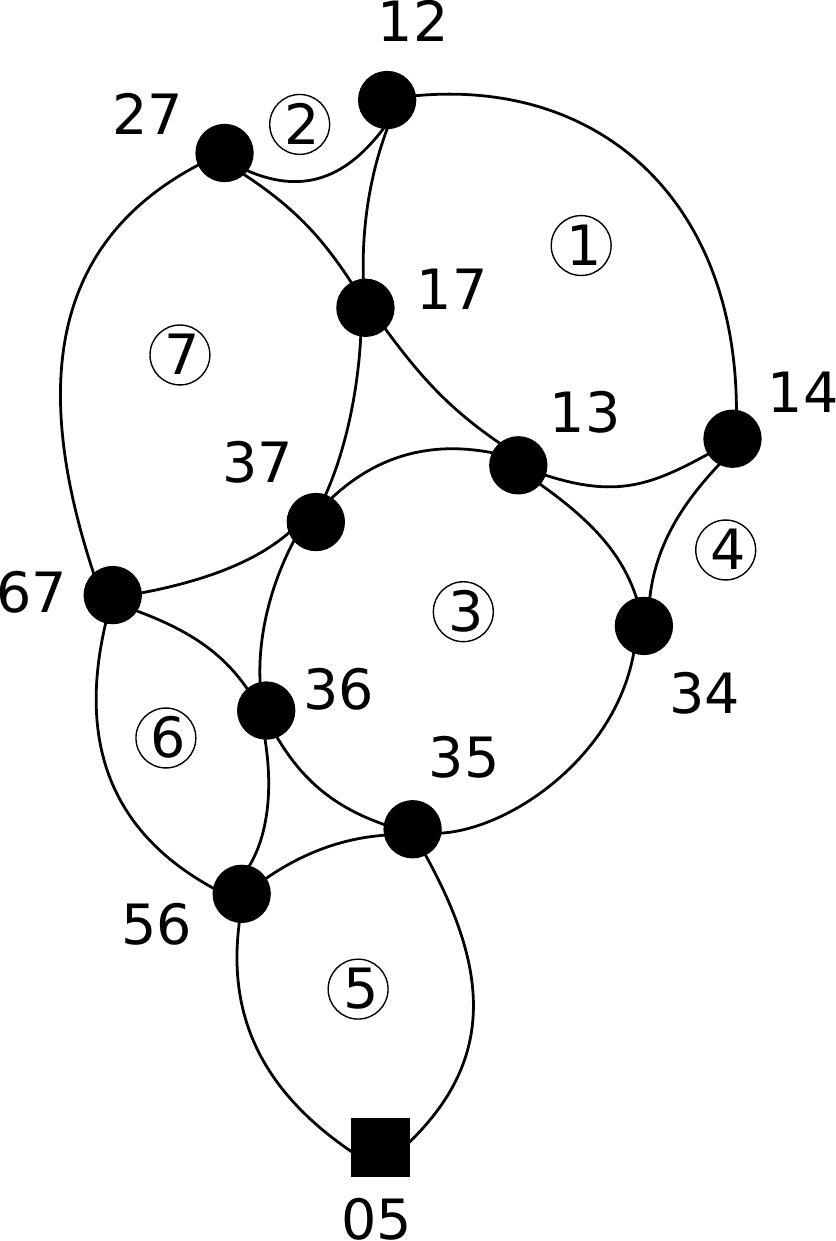}
		\label{fig:example_graph_converted}
	}
	\end{tabular}
	\caption{Example of a tour graph $G=(V,E)$ (a) and the converted graph $G'=(V',E',W)$ (b). The meeting points of the original graph are the vertices $V'$ in the new graph and the lengths $w$ of the edges in $E'$ between the vertices in $V'$ have the minimum travel times of the segments of the original tours. If there are two edges between two meeting points, the longer one is discarded.}
	\label{fig:example_graph}
\end{figure}

\begin{algorithm}
	\caption{Heuristic for MDTD (MDTD-CG)}
	\label{alg:mdtd_cg}
	\small
	\begin{algorithmic}[1] 
		\Require
			\Statex \begin{flushleft} Tour graph $G=(V,E)$, converted graph $G'=(V',E',W)$ with edge lengths $W(e)$, $\forall~e~\in~E'$, base station $v_{0x}$ \end{flushleft}
		\Ensure
			\Statex \begin{flushleft} Tour tree $T=(V,A)$, directions $d_v$ $\forall~v\in~V$ \end{flushleft}
	\State $len_i \gets \infty$, $\forall i \in V$
	\State $p \gets shortest\_path_{G'}(v_{0x})$
	\For{$v_{kl} \in V'$}\label{line:mdtd_cg_pathfor}
		\If{$dist_{G'}(v_{kl}, v_{0x}) < len_k$}
			\State $len_k \gets dist_{G'}(v_{kl}, v_{0x})$
			\State $path_k \gets p_{v_{kl}}$
		\EndIf
		\If{$dist_{G'}(v_{kl}, v_{0x}) < len_l$}
			\State $len_l \gets dist_{G'}(v_{kl}, v_{0x})$
			\State $path_l \gets p_{v_{kl}}$
		\EndIf
	\EndFor
	\State $A \gets \emptyset, r \gets null, s \gets null$
	\State $del_i \gets \min_{d \in \{cw, ccw\}}\{l_i - l_i^d(path_i(1), d)\}$
	\For{$i \in V \text{ in descending order of } (len_i+del_i)$}\label{line:mdtd_cg_treefor}
		\State $U \gets \emptyset$, $m \gets i$
		\For{each $v_{kl}$ on path $path_i$ to $v_{0x}$}
			\If{$m$ is already in $T$}\State{break}\EndIf \label{line:mdtd_cg_checkexitloop}
			\If{$(m \neq k)$ and $(m \neq l)$}\label{line:mdtd_cg_checklefttour}
				\If{$m = r$}
					\State $U \gets U \cup \{(m, s)\}$, $m \gets s$
				\Else \If{$m = s$}
					\State $U \gets U \cup \{(m, r)\}$, $m \gets r$ \EndIf \EndIf
				\State $d_m \gets $ determine direction based on\label{line:mdtd_cg_dir}
				\StatexIndent[4] order of $v_{rs}$ and its successor on 
				\StatexIndent[4] $path_i$ on tour $m$
			\EndIf
			\State $v_{rs} \gets v_{kl}$
		\EndFor
		\State $A \gets A \cup U$
	\EndFor
	
	\end{algorithmic}
\end{algorithm}

\begin{figure}
	\centering
	\begin{tabular}{cc}
		\subfloat[]{
			\label{fig:example_mdtd_heur_1_converted}
			\includegraphics[scale=0.35]{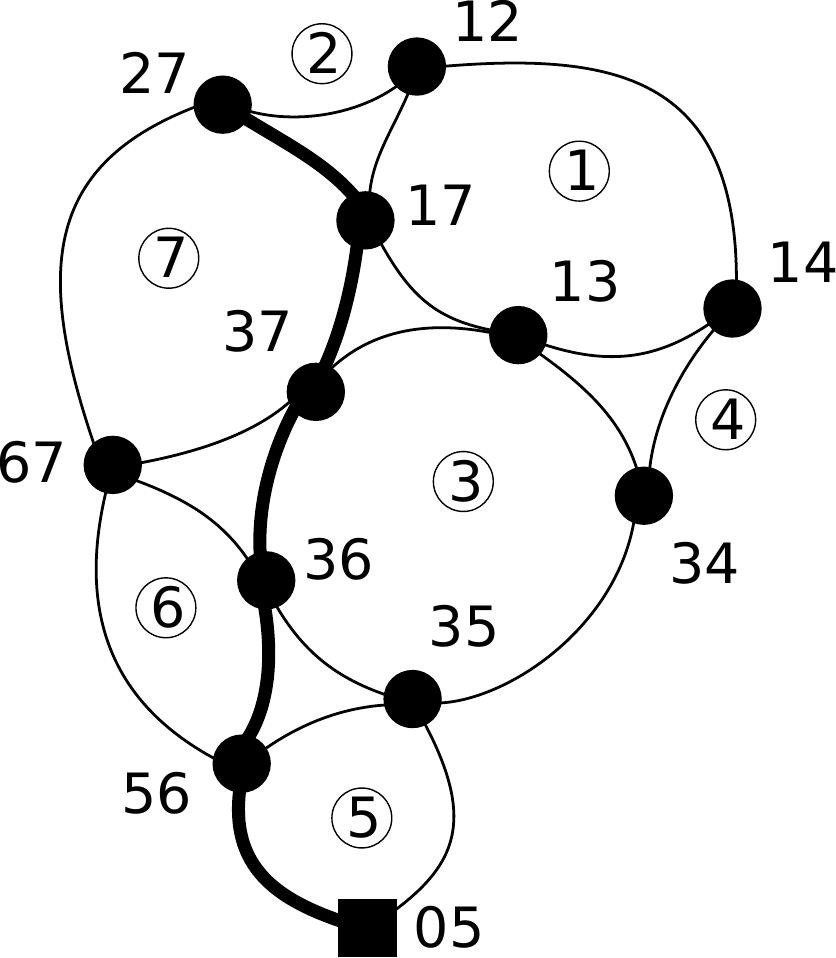}
		}
		&
		\subfloat[]{
			\label{fig:example_mdtd_heur_2_converted}
			\includegraphics[scale=0.35]{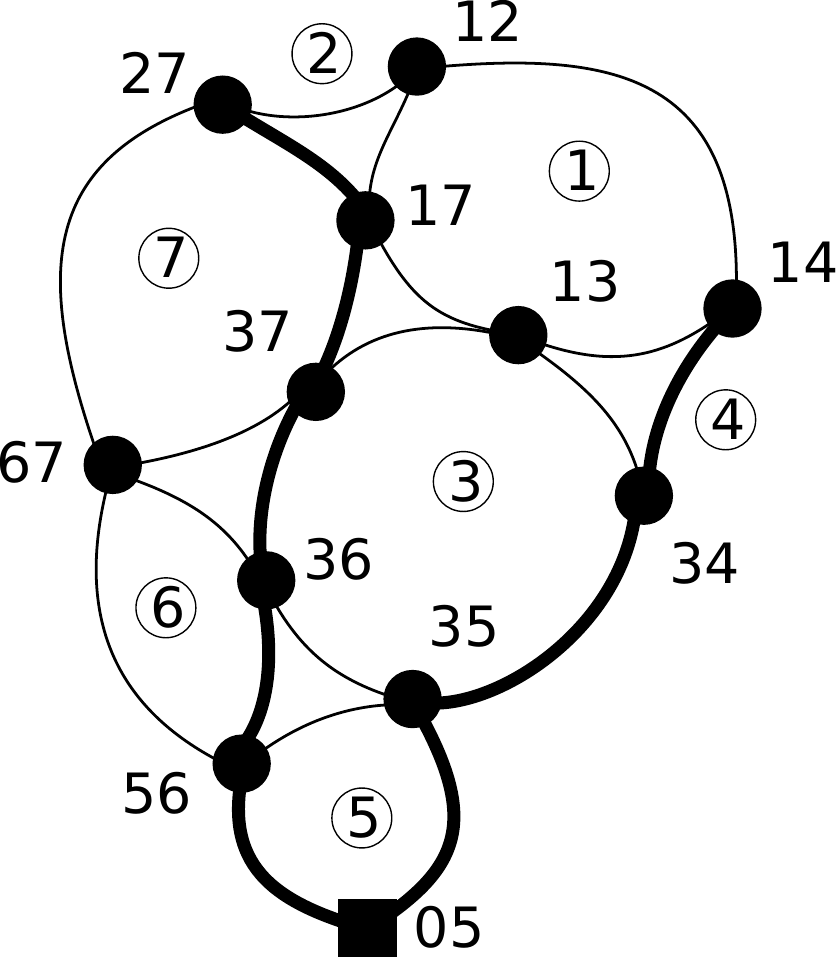}
		}
		\\
		\subfloat[]{
			\label{fig:example_mdtd_heur_1_graph}
			\includegraphics[scale=0.25]{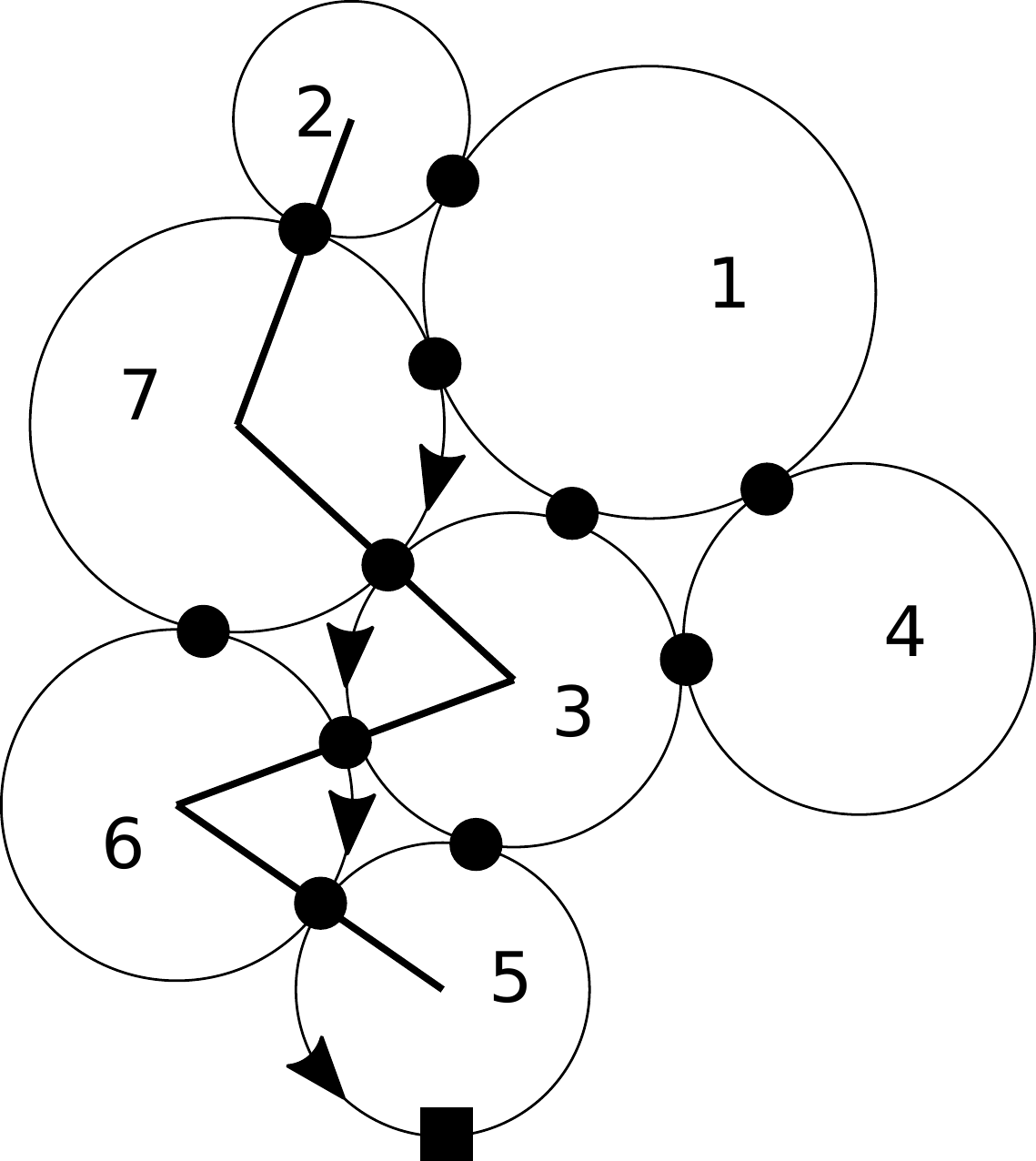}
		}
		&
		\subfloat[]{
			\label{fig:example_mdtd_heur_2_graph}
			\includegraphics[scale=0.25]{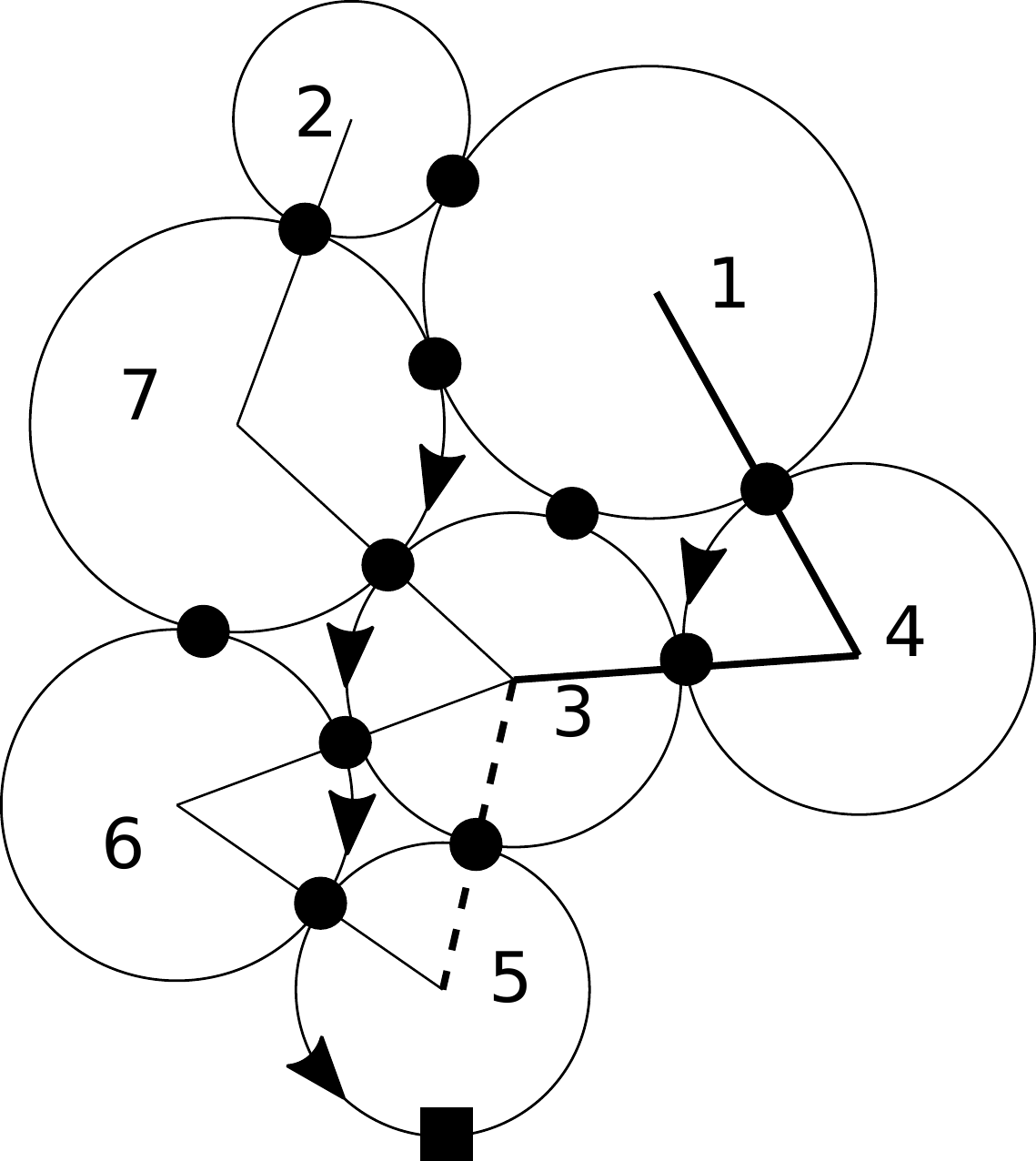}
		}
	\end{tabular}
	\caption{Two steps of the tree generation of Algorithm~\ref{alg:mdtd_cg}. First, the path in $G'$ starting from $27$ is considered (bold edges in Figure~\ref{fig:example_mdtd_heur_1_converted}), which results in the path $(2,7,3,6,5)$ in $G$ and the directions $d_7=d_6=cw, d_3=d_5=ccw$ (bold lines in Figure~\ref{fig:example_mdtd_heur_1_graph}). Next, the path starting at $14$ is considered (Figure~\ref{fig:example_mdtd_heur_2_converted}), which results in the path $(1,4,3,5)$ in G. Since $3$ is already in the tree, only the branch $(1,4,3)$ with $d_4=ccw$ is added (Figure~\ref{fig:example_mdtd_heur_1_graph}, the dashed line indicates the discarded part of the path). After this step the tree contains all tours.}
	\label{fig:example_mdtd_heur}
\end{figure}

\begin{proposition}
	Let $WD_{SP}$ and $WD_{CG}$ be the worst delay of a tree determined by \mbox{MDTD-SP} and \mbox{MDTD-CG}, respectively. Then, for every $\alpha>0$ there are instances of tour graphs such that $WD_{SP}/WD_{CG} > \alpha$.
\end{proposition}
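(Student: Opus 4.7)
The plan is to exhibit, for each $\alpha > 0$, a tour-graph instance on which \mbox{MDTD-SP} is forced to route the critical data across a single long meeting-edge, while \mbox{MDTD-CG} routes the same data along a chain of many physically short edges. I would use a non-sensing base-station tour $v_0$, a single sensing tour $B$ with $l_B = \epsilon$ and $P_S(B) \ne \emptyset$, and $k$ non-sensing auxiliary tours $C_1, \ldots, C_k$ each of length $\epsilon$. Two parallel routes connect $B$ to $v_0$: a direct meeting edge $[B, v_0]$ whose meeting point on $v_0$ lies at segment distance $\Gamma$ from the base station, and the chain $[B, C_1], [C_1, C_2], \ldots, [C_{k-1}, C_k], [C_k, v_0]$ in which every inter-meeting segment on every intermediate tour, as well as the gap on $v_0$ between the $C_k$-meeting and the base station, has length $\epsilon$.

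I would then compare the two trees. Running single-source Dijkstra on the unweighted tour graph makes $v_0$ the unique parent of $B$ in \mbox{MDTD-SP}, so $B$'s data must cross the length-$\Gamma$ segment on $v_0$; invoking Algorithm~\ref{alg:mindelay_sched} on this tree yields $WD_{SP} \ge \Gamma$. For \mbox{MDTD-CG}, the converted graph $G'$ assigns weighted distance $\Theta(\Gamma)$ from the meeting-point vertex $v_{B v_0}$ to the base-station vertex, but only $O(k\epsilon)$ from $v_{B C_1}$ via the chain; choosing $k\epsilon < \Gamma$ forces $len_B = O(k\epsilon)$ and makes $path_B$ thread the chain, so the edges $(B, C_1), (C_1, C_2), \ldots, (C_k, v_0)$ are committed to $A$. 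Algorithm~\ref{alg:mindelay_sched} applied to this tree gives $WD_{CG} = O(l_B + k\epsilon) = O(k\epsilon)$, a bounded constant $c$ once $k$ and $\epsilon$ are fixed.

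Given $\alpha > 0$, I would fix $k$ and $\epsilon$ so that $c$ is constant and then choose $\Gamma > \alpha c$, producing $WD_{SP}/WD_{CG} \ge \Gamma/c > \alpha$ on this instance. The main obstacle I anticipate is bookkeeping rather than depth: confirming that \mbox{MDTD-CG}'s greedy outer loop (Line~\ref{line:mdtd_cg_treefor}, processing tours in descending order of $len_i + del_i$) commits the chain branch for $B$ in its first iteration and that no subsequent iteration overwrites it. This follows from the early-termination check in Line~\ref{line:mdtd_cg_checkexitloop}, which exits the inner loop as soon as a tour already present in the tree is encountered, so the branch installed when $B$ is processed first (which it is, since $B$ has the largest $len_i + del_i$) survives unchanged. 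I would also verify that setting $P_S(v) = \emptyset$ for every $v \ne B$ is admissible under the problem formulation, which it is, since nothing in Section~\ref{sec:problem} requires $P_S(v)$ to be non-empty for every $v$.
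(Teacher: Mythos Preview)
Your approach is correct and takes a genuinely different route from the paper. The paper builds a chain $v_0,v_1,\ldots,v_k$ of \emph{large} tours of length $\Gamma$, each attached to $v_0$ by an arm of at least $k$ small tours of length $\epsilon$; because every arm has $\geq k$ hops, the unweighted BFS in \mbox{MDTD-SP} routes each $v_i$ through the chain of large tours (delay $\Theta(k\Gamma)$ for $v_k$), while \mbox{MDTD-CG} sends each $v_i$ down its arm, giving $WD_{CG}=\Theta(\Gamma)$ (dominated by one large tour's own length) and hence ratio $\Theta(k)$. That construction works with $P(v)=P_S(v)$ everywhere, so the separation holds in the restrictive all-sensing regime. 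Your instance is leaner---one sensing tour $B$, one long non-sensing $v_0$, and a short non-sensing chain---and drives the ratio with $\Gamma$ rather than with $k$; but it buys this simplicity by setting $P_S(v_0)=\emptyset$, without which $v_0$'s own length-$\Theta(\Gamma)$ data would force $WD_{CG}=\Theta(\Gamma)$ and collapse the ratio to $O(1)$.

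One detail to tighten: you place the $B$-meeting on $v_0$ at segment distance $\Gamma$ from the base station but never fix $l_{v_0}$. If $l_{v_0}<2\Gamma$, the \emph{other} arc on $v_0$ from the $B$-meeting to the base station has length $l_{v_0}-\Gamma<\Gamma$, and Algorithm~\ref{alg:mindelay_sched} will select that direction on $v_0$, giving $WD_{SP}<\Gamma$. You need $l_{v_0}\geq 2\Gamma$ (for instance, put the $B$-meeting diametrically opposite the base station on a tour of length exactly $2\Gamma$) so that both directions on $v_0$ cost at least $\Gamma$; with that specification your argument goes through as written.
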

\begin{proof}
	Consider a tour graph with a chain $[v_0, v_1], [v_1, v_2], \ldots, [v_{k-1}, v_k]$ of large tours of length $\Gamma$ where each tour is connected with an arm of small tours to $v_0$ (see Figure~\ref{fig:example_ratio_heur}). Each tour on an arm has length $\epsilon$ and each arm has at least $k$ tours. The meeting points on the chain of the large tours are on the opposite sides of the tours, i.e. $time_{v_i}(p_{v_i}^{meet}(v_{i-1}), p_{v_i}^{meet}(v_{i+1}))=\Gamma/2$. Then \mbox{MDTD-SP} will result in tree where all large tours are in a chain. If $\Gamma / \epsilon$ is large enough, \mbox{MDTD-CG} will create a tree where each large tour is connected with its arm to $v_0$. If $k > 2\alpha$, then $WD_{SP}/WD_{CG} > \alpha$.
\end{proof}

\begin{figure}
	\centering
	\includegraphics[scale=0.6]{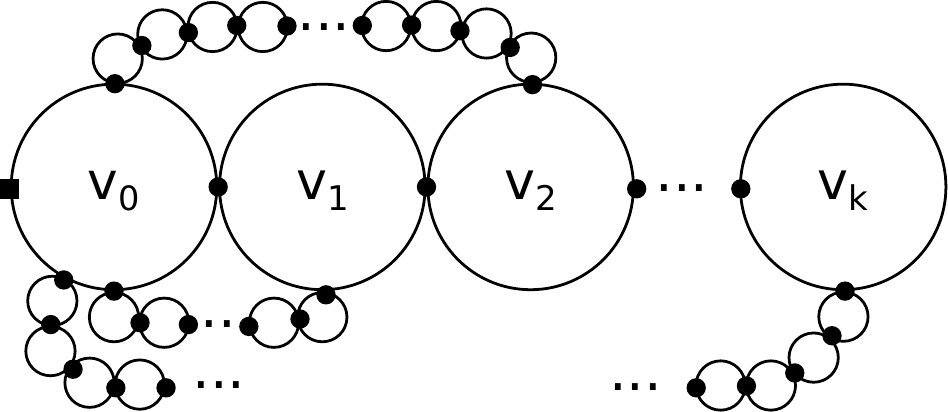}
	\caption{Example of a tour graph (size of the circle indicates the length $l_v$).}
	\label{fig:example_ratio_heur}
\end{figure}

\section{Online execution}
\label{sec:online}

Once the tree (Section~\ref{sec:mdtd_heur}), the directions and the schedule (Subsection~\ref{subsec:dir}) have been determined, the robots have to execute this schedule. If the robots need to be deployed in the environment the schedule determined by Algorithm~\ref{alg:mindelay_sched} has to be reached by the robots from an initial state. We assume that each robot navigates along some path in the environment to the meeting point with its parent at the beginning of the mission and reaches this position after some time. The algorithm for the online execution is described in the following subsection.

\subsection{State machine}

The algorithm running on every robot $v$ is shown in Algorithm~\ref{alg:online_sm} which resembles a state machine where the variable $state$ can take one of the states $\{INIT, AT\_WAIT, MOVING\}$. The robot is in $INIT$ state as long as it is moving from the initial position to the meeting point with its parent on its tour ($p_v^{start}$). In state $AT\_WAIT$ it is waiting for its parent on $p_v^{start}$, and in state $MOVING$ it is moving along its tour. The input to the algorithm is the schedule (in particular $\Delta_{uv}$ determined by Algorithm~\ref{alg:mindelay_sched}) and the output are commands $Move$ and $Stop$ for the motion actuators. The state of the state machine is initially $INIT$. We will show that under this assumption there is an infinite sequence of state transitions $INIT, WAIT\_AT, MOVING, WAIT\_AT, \ldots$ and that the schedule will converge to the optimal schedule after a finite time. A state transition $A$, $B$ means that variable $state$ changes from $state=A$ to $state=B$. Because of the assumption that every robot $v$ will reach $p_v^{start}$, a transition from $state=INIT$ to $state=AT\_WAIT$ always happens for every robot.

\begin{proposition}
\label{prop:sm_live}
A robot never has to wait for an infinite time. This implies an infinite sequence of state transitions $WAIT\_AT, MOVING, \ldots$ for each robot.
\end{proposition}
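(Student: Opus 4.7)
The plan is to prove this by induction on the depth $d$ of a robot $v$ in the tour tree $T$ rooted at $v_0$. The stronger inductive claim I would prove is that there exists a finite bound $T_v$ such that $v$ re-enters $MOVING$ at least once every $T_v$ time units after the transient deployment phase ends. Since every $MOVING$ interval necessarily terminates in time at most $l_v$ (the tour length is finite and the robot cannot wait at arbitrary points on the tour), every $AT\_WAIT$ interval is then bounded by $T_v$, and the infinite alternating sequence $WAIT\_AT, MOVING, \ldots$ follows immediately.

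For the base case $d=0$, the root $v_0$ has no parent and therefore no parent to wait for in $AT\_WAIT$; it only incurs the scheduled delay at $p_{v_0}^{start}=p_{v_0}^{BS}$, which is finite by construction of the repeated schedule. After this delay it enters $MOVING$, traverses its tour in time $l_{v_0}$, and returns, giving a bounded cycle $T_{v_0}$. For the inductive step, take $v$ at depth $d+1$ with parent $u$ at depth $d$. By hypothesis $u$ cycles with bounded period $T_u$, so during every such cycle $u$ visits $p_u^{meet}(v)$ exactly once, which is precisely where $v$ is standing in $AT\_WAIT$. Thus the wait of $v$ is bounded by one period of $u$, and setting $T_v := T_u + l_v$ (plus a constant absorbing the offset within $u$'s cycle) closes the induction.

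The main obstacle is verifying that while $v$ itself is in $MOVING$, it never gets stuck indefinitely on account of a child $w$ that is still in $INIT$ or has not yet reached its rendezvous point. This requires inspecting Algorithm~\ref{alg:online_sm} to confirm that $v$ only exchanges data with whichever children happen to be in $AT\_WAIT$ at the corresponding meeting point and otherwise continues along its tour without blocking, so that a missed rendezvous costs at most one additional cycle rather than creating a deadlock. Once this is established, the argument propagates cleanly from the root down through the tree, and the assumption that each robot reaches $p_v^{start}$ in finite time from its initial position (so the $INIT\to AT\_WAIT$ transition always occurs) guarantees that the induction base eventually activates every branch.
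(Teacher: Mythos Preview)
Your root-down induction on depth mirrors the paper's argument, and you correctly isolate the child-rendezvous as the crux. The gap is that your proposed resolution of this ``main obstacle'' rests on a misreading of Algorithm~\ref{alg:online_sm}. You assert that ``every $MOVING$ interval necessarily terminates in time at most $l_v$'' and that the parent ``continues along its tour without blocking'' when a child is absent. Both are false: line~\ref{line:sm_waitchild} is an explicit \emph{Wait until $p_w(t)=p_w^{meet}(v)$}, so a parent in state $MOVING$ does block at every child meeting point until that child arrives. Consequently your bound $T_v := T_u + l_v$ is not valid as written.

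The repair does not require abandoning the induction. The key observation, which the paper uses, is that $p_w^{meet}(v)=p_w^{start}$: the point at which the parent waits for $w$ is exactly where $w$ parks after $INIT$ and where $w$ sits throughout every $AT\_WAIT$ phase. Because the meeting structure is a tree, any chain ``parent waits for child, child is in $MOVING$ waiting for grandchild, \ldots'' terminates at a leaf, whose $MOVING$ phase has no child-waits and lasts exactly its tour length. The paper compresses this into the remark that it suffices to show no robot waits infinitely for its \emph{parent}; acyclicity of the tour tree then excludes infinite child-waits. With that correction your argument goes through, although the cycle bound $T_v$ must be enlarged to absorb the (finite) accumulated child-waits along $v$'s tour.
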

\begin{proof}
The situations when a robot has to wait infinitely long is when condition $p_u(t)=p_u^{meet}(v)$ (waiting for the parent) in state $WAIT\_AT$ never holds or when condition $p_w(t) = p_w^{meet}(v)$ (waiting for a child) in state $MOVING$ never holds. Since the meeting points define a tour tree $T=(V,A)$, it is sufficient to show that no robot has to wait for its parent infinitely long. After a robot has met its parent, it is waiting for a finite time (line~\ref{line:sm_waitchild}) and traverses its tour. The proof is by induction on the number of robots in the tree. In the base case only the robot $v$ which has the base station on its tour is in the tree and the condition $p_0(t)=p_0^{meet}(v)$ always holds (0 is the base station and the parent of $v$). In the inductive step a tour $v$ is added to the tree. Since its parent $u$ does not have to wait for its parent and starts its tour after a finite waiting time, also $v$ meets $u$ at $p_v^{start}$ (when $p_u(t)=p_u^{meet}(v)$).
\end{proof}

\begin{proposition}
\label{prop:sm_opt}
After a finite number of state transitions for each robot from $MOVING$ to $AT\_WAIT$ the schedule has converged to the schedule determined by Algorithm~\ref{alg:mindelay_sched}, i.e. $\Delta t$ stays 0.
\end{proposition}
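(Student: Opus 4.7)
The plan is to show convergence by a two-part induction on the tour tree $T$, rooted at $v_0$: one pass argues that each robot's $AT\_WAIT$-to-$MOVING$ transition settles onto the schedule of Algorithm~\ref{alg:mindelay_sched}, and a second pass argues the same for the $MOVING$-to-$AT\_WAIT$ transition, so that the deviation $\Delta t$ vanishes for each robot after finitely many cycles.

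\textbf{Top-down pass.} I would induct on depth starting from the root. The root $v_0$ has no proper parent in $T$ because the base station sits on its own tour, so $v_0$'s $AT\_WAIT$ duration at $p_{v_0}^{start}$ equals the value $wait_{v_0}(p_{v_0}^{start})$ prescribed by Algorithm~\ref{alg:mindelay_sched}. By Proposition~\ref{prop:sm_live}, $v_0$ reaches $p_{v_0}^{start}$ in finite time, after which its $AT\_WAIT$-to-$MOVING$ transitions occur on schedule. Assuming the same for every ancestor of a robot $v$, the parent $\pi(v)$ passes through $p_v^{meet}(\pi(v)) = p_v^{start}$ at the scheduled time; since by Proposition~\ref{prop:sm_live} $v$ has already entered $AT\_WAIT$ at $p_v^{start}$ in some earlier cycle, the meeting triggers the $AT\_WAIT$-to-$MOVING$ transition of $v$ on schedule.

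\textbf{Bottom-up pass.} I would induct on reverse depth starting from the leaves. Algorithm~\ref{alg:mindelay_sched} prescribes no intermediate waiting along any tour, so once a leaf $w$ starts $MOVING$ on schedule, it returns to $p_w^{start}$ exactly $l_w$ time later, putting its $MOVING$-to-$AT\_WAIT$ transition on schedule. For an internal robot $v$, once its children converge inductively, each child sits in $AT\_WAIT$ at its respective meeting point before $v$ arrives, so $v$ incurs no extra wait during $MOVING$; hence $v$'s $MOVING$ phase lasts exactly $l_v$ and ends on schedule.

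\textbf{Main obstacle.} The subtle point is the apparent circularity between the two passes: the top-down conclusion assumes that the parent's $MOVING$ unfolds on schedule, which in turn depends on the parent's children being synchronized (the bottom-up conclusion). I would resolve this by quantifying the transient. Define $\Delta t_v^k \geq 0$ as the lag of $v$'s $k$-th arrival at $p_v^{start}$ relative to the scheduled instant, and observe that a robot can never be early because its departure from $p_v^{start}$ is gated by its parent. Each cycle has slack at least $L - l_v \geq 0$ absorbed in $AT\_WAIT$, which caps how much lag accumulated during $MOVING$ (due to waiting on tardy children) can survive to the next cycle. An explicit bound on lag propagation across one layer of $T$ then yields $\Delta t_v^k = 0$ after a number of cycles depending on the depth of $v$ and the initial delays, establishing the claim that $\Delta t$ stays $0$ after finitely many $MOVING$-to-$AT\_WAIT$ transitions for every robot.
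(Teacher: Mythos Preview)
Your two-pass structure introduces a circularity that the paper sidesteps entirely. The paper's proof is a single bottom-up induction on the tour tree and works directly with the local quantity $\Delta t_v$, which records only the extra time $v$ spent waiting for its children during its last traversal. Crucially, $\Delta t_v = 0$ is merely the statement that every child was already sitting at its meeting point when $v$ passed by; it says nothing about whether $v$ itself (or any ancestor) departed ``on schedule.'' This purely local reading of the claim removes any need for your top-down pass.

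Concretely, the paper argues as follows. Take a vertex $v$ at maximum depth, so all its children are leaves. Once $v$ has met its parent once (Proposition~\ref{prop:sm_live} guarantees this with no schedule assumption) and completed one full traversal, every leaf child $w$ has been released; being a leaf, $w$ completes its tour in exactly $l_w$ with no intermediate stops and is back at $p_w^{start}$ before $v$'s second pass through the meeting point. Hence $\Delta t_v = 0$ from the second traversal onward. One then replaces ``leaf'' by ``child whose $\Delta t$ has already stabilised at $0$'' and climbs one level per additional traversal of the parent, reaching the root after finitely many steps.

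Your global lag quantity $\Delta t_v^k$ (measured against a reference repeated schedule) and the slack-absorption bound are therefore unnecessary machinery. The obstacle you flag is an artifact of reframing the target as ``the $MOVING$ phase starts and ends at the scheduled instant'' rather than simply ``$\Delta t$ stays $0$''; the paper proves the latter directly and never compares to a global clock.
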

\begin{proof}
Consider a tour $v$ with largest distance from the base station tour in the tour tree which has only leaves as children. After $v$ met its parent it starts traversing the tour and possibly has to wait for children to reach the meeting point. After the first traversal (state transition from $MOVING$ to $AT\_WAIT$) of the tour all children started their tour and had enough time to finish their tour and to reach the meeting position with $v$ on the second traversal of $v$. Therefore $\Delta t$ will be 0 for $v$ after the second traversal. The same holds for the parent $u$ of $v$ after an additional state transition from $MOVING$ to $AT\_WAIT$ of $u$. This argument can be repeated until the base station tour is reached.
\end{proof}

Figure~\ref{fig:example_online} shows the startup phase and execution of the state machine for the given example tour tree. After the schedule has emerged, robot $5$ arrives late at $p_v^{start}$ (short horizontal line at robot $5$ indicates no progress on its tour) and the schedule emerges again.

\begin{algorithm}
	\caption{State machine of robot $v$}
	\label{alg:online_sm}
	\small
	\begin{algorithmic}[1] 
		\Require
			\Statex \begin{flushleft} $p_v^{start}, p_v^{meet}(w)$ $\forall w$ with $(w,v)\in A, \Delta_{uv}$ for $(v,u) \in A$, initially $state=INIT$ \end{flushleft}
		\Ensure
			\Statex \begin{flushleft} actuator commands $\{Move, Stop\}$\end{flushleft}
		\Switch{$state$}
			\Case{$INIT$}
				\If{$p_v(t)=p_v^{start}$}
					\State $\Delta t \gets 0$
					\State $state \gets AT\_WAIT$
					\State $Stop$
				\Else
					\State $Move$ to $p_v^{start}$
				\EndIf
			\EndCase
			\Case{$AT\_WAIT$}
				\If{$p_u(t)=p_u^{meet}(v)$}
					\State $Wait$ for $\max\{\Delta_{uv}-\Delta t, 0\}$	\label{line:sm_waitparent}
					\State $\Delta t \gets 0$
					\State $state \gets MOVING$
					\State $Move$ on tour
				\EndIf
			\EndCase
			\Case{$MOVING$}
				\If{$p_v(t)=p_v^{meet}(w)$}
					\State $Stop$
					\State $time \gets current\_time()$
					\State $Wait$ until $p_w(t) = p_w^{meet}(v)$	\label{line:sm_waitchild}
					\State $\Delta t \gets \Delta t + (current\_time()-time)$
					\State $Move$ on tour
				\Else
					\If{$p_v(t)=p_v^{start}$}
						\State $state \gets AT\_WAIT$
						\State $Stop$
					\EndIf
				\EndIf
			\EndCase
		\EndSwitch
	\end{algorithmic}
\end{algorithm}

\begin{figure}
	\centering
	\subfloat[]{
		\includegraphics[scale=0.45]{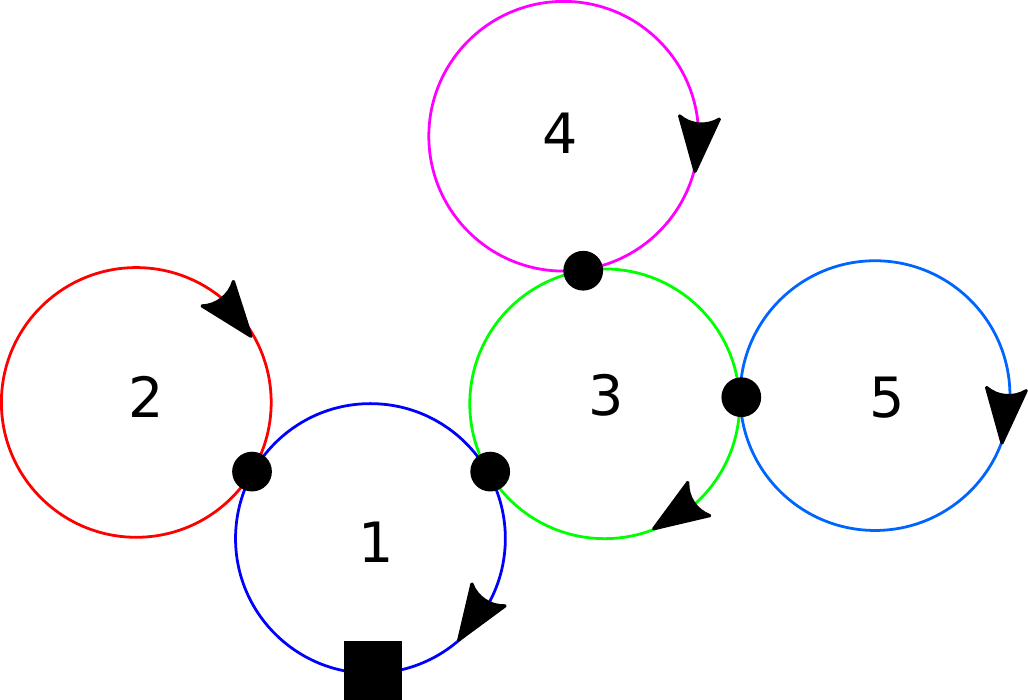}
		\label{fig:example_online_graph}
	}
	\subfloat[]{
		\includegraphics[scale=0.45]{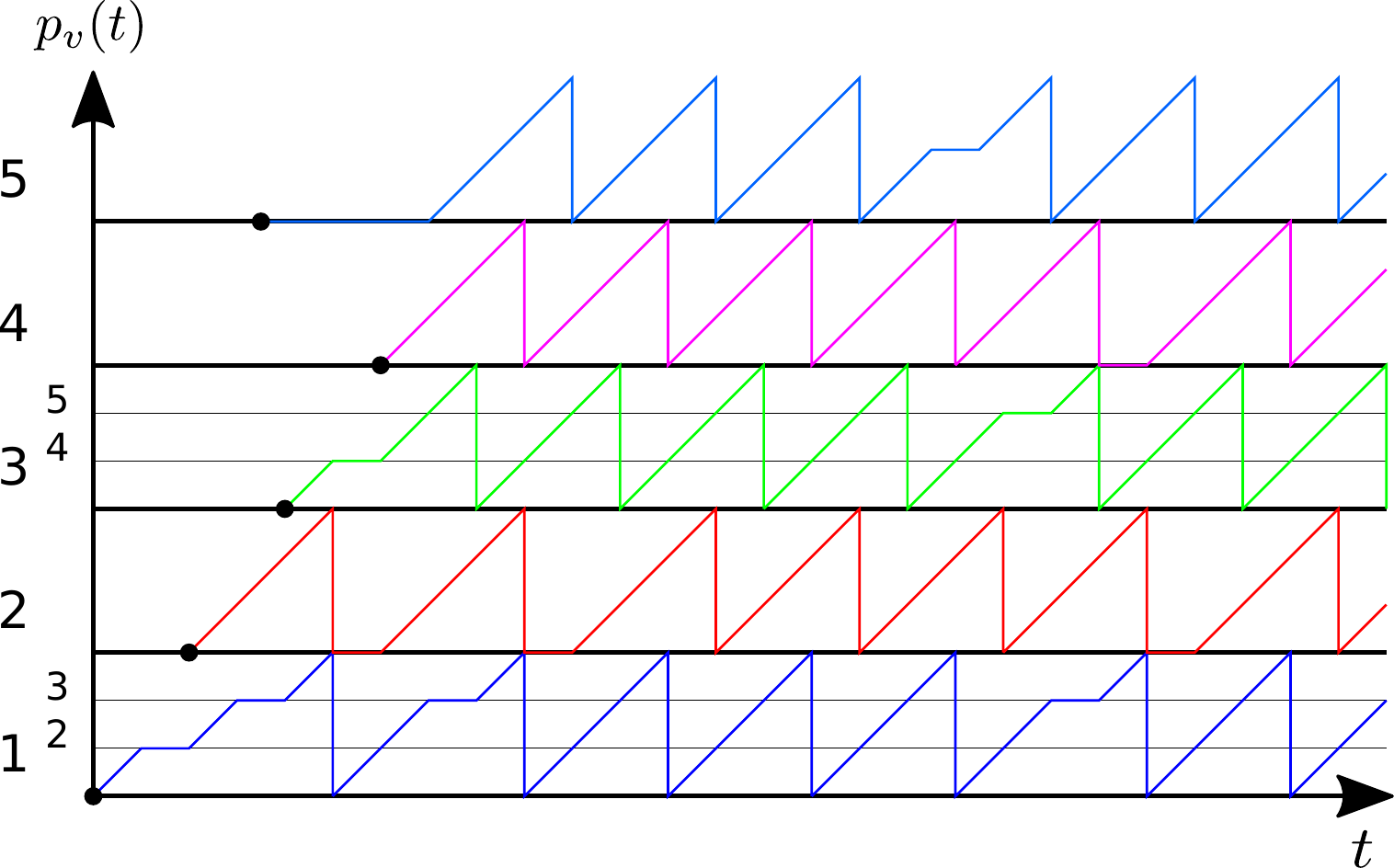}
		\label{fig:example_online_timing}
	}
	\caption{Example of startup phase and execution of the state machine. (a) Tour tree with tours of equal length and cw direction for all robots. (b) Position $p_v(t)$ over $t$ for robots 1 to 5 where the each bottom line indicates $p_v^{start}$. The small numbers on the vertical axis indicate the position of the meeting points on tour $v$. A small dot indicates when robot $v$ reaches $p_v^{start}$, i.e., the state transition from $INIT$ to $AT\_WAIT$.}
	\label{fig:example_online}
\end{figure}

\section{Experimental evaluation}
\label{sec:eval}

In this section we describe the results from simulation experiments with the aim to assess the performance of the heuristics (\mbox{MDTD-CG/SP}) in terms of worst idleness $WI$ and worst delay $WD$ in different situations (number of robots). To assess the effect of robots cooperating for the data transportation, we compare \mbox{MDTD-CG/SP} with an approach where the data is not transported via other robots to the base station but directly by the robot which captures the data (this approach is denoted as single-hop approach). Additionally, a breadth-first traversal of the tour graph has been implemented where the directions of the tours are determined by Algorithm~\ref{alg:mindelay_sched} from the resulting tree (\mbox{MDTD-SP}).

The environment is modeled as rectangular grid of cells of unit size, and time is discretized into time steps. A robot can move from one cell of the grid to one of the 8 neighboring cells or stay at the same cell within one time step. The communication range $R^{com}$ (measured in number of cells) determines which cells are within communication range. The base station is in the cell at the lower left corner.

A genetic algorithm implementation\footnote{Matlab function tsp\_ga from Joseph Kirk at https://www.mathworks.com/matlabcentral/ fileexchange/13680-traveling-salesman-problem-genetic-algorithm} is used to determine a tour through all sensing locations and the base station. To obtain the individual tours for the robots, the tour is split with \mbox{k-SPLITOUR} \cite{Frederickson1976}.

For \mbox{MDTD}, meeting points from a set of potential meeting points between each pair of tours have to be selected. In Section~\ref{subsec:mdt} we consider selecting the edges (corresponding to meeting points) in the tour graph such that the resulting graph is a tree, whereas here we are concerned with the selection of one of possible multiple edges between two vertices in the tour graph (cf. Section~\ref{subsec:mdtdm}). For a certain $R^{com}$, a potential meeting point between two tours is a pair of cells on the two tours within communication range. For the selection of meetings points, tours are traversed in a breadth-first order starting at the tour which is connected to the base station. The tours are added in the traversal order to a converted graph (the converted graph is described in Section~\ref{sec:mdtd_heur}), where the vertices are the meeting points selected so far. For every potential meeting point of $v$ with a neighboring tour $v'$, the shortest path to the base station on the converted graph is calculated, and the meeting point with the shortest path is selected as meeting point between $v$ and $v'$. This heuristic tries to shift meeting points as close as possible to the base station in the converted graph.

First, we compare the performance of \mbox{MDTD-CG/SP} with a single-hop algorithm similar to the one in \cite{Banfi2015} where robots make detours to communication sites to transmit the data. An increasing number of detours are inserted in a pre-calculated tour for each UAV until the total travel distance exceeds a certain travel budget. The heuristic of \cite{Banfi2015} tries to minimize the average delay and is not well suited for minimizing the worst delay. Here, an increasing number of detours to the base station, which are evenly spread along a robot's tour (using \mbox{k-SPLITOUR}), are inserted until a certain total tour length of a robot is exceeded. This bound for the tour length is set to the maximum of the worst idleness resulting from \mbox{MDTD-CG/SP} and the maximum tour length (including the base station) for each robot, such that every robot is able to transmit the data from its tour to the base station at least once. The results for $WI$ and $WD$ for different number of robots on a grid with an area of $20\times 60$ sensing locations is shown in Figure~\ref{fig:res_mdtd_shdetour_u_wi} and Figure~\ref{fig:res_mdtd_shdetour_u_wd}, respectively. Due to the stochastic nature of the genetic algorithm, the experiment is repeated 10 times for each $n$, and the standard deviation is a also shown in the figures.

Figure~\ref{fig:res_mdtd_shdetour_u_wd} also shows the worst delay $WD$ of the optimal solutions for the MDT instances generated with the state-of-the-art IP solver Gurobi\footnote{http://www.gurobi.com/} (an MDT instance is defined by the tours and the meeting points, see Appendix A for the MILP formulation). Note that the $WI$ is the same for \mbox{MDTD-CG/SP} and MDTD (opt).

From Figure~\ref{fig:res_mdtd_shdetour_u_wi} and Figure~\ref{fig:res_mdtd_shdetour_u_wd} it can be seen that on one hand \mbox{MDTD-CG}/\allowbreak SP/opt can outperform the single-hop approach in terms of $WI$. The value for $WI$ is the same for all three algorithms since all use the same tours. In the single-hop approach a robot from a more distant subarea has a long path to the base station, which causes a large $WI$. On the other hand, the single-hop approach can outperform \mbox{MDTD-CG/SP/opt} in terms of $WD$ because the data travels the shortest possible path to the base station which can be seen as a lower bound for $WD$ with given tours.

The average computation times for different number of robots is shown in Table~\ref{tab:soltime}. For \mbox{MDTD-CG/SP} a single core and for MDTD (opt) all 8 logical cores of a machine with an Intel Core-i7 6700K and 32GB of RAM were used. The instances for MDTD (opt) are the same as for Figure~\ref{fig:res_mdtd_shdetour_u_wi} and Figure~\ref{fig:res_mdtd_shdetour_u_wd}. The instances (tour graphs) for \mbox{MDTD-CG/SP} have been randomly generated (10 instances for each $n$), with edge probability of 0.25 between tours and randomly sampled meeting point distances on a tour.

In Figure~\ref{fig:res_mdtd_shdetour_u_dist} the sum of the traveled distances over all robots is shown for \mbox{MDTD-CG/SP} and the single-hop approach. The horizon for the delay calculation is the $WI$ achieved by \mbox{MDTD-CG/SP}. The reason is that for \mbox{MDTD-CG/SP} every sensing location gets visited once within this horizon. Since every robot is constantly moving in the single-hop approach the sum of the traveled distances is higher than for \mbox{MDTD-CG/SP} where robots might have to wait at meeting positions.

\begin{figure}
	\centering
	\begin{tikzpicture}
		\begin{axis}[
			height=4cm,
			scale only axis,
			width=0.7\columnwidth,			
			xmin=1, xmax=21,
			axis y line*=left,
			axis x line*=bottom,
			ymin = 0,
			xtick=data,
			xlabel=Number of robots $n$,
			ylabel=$WI$,
			legend style={at={(0.98, 1)}, font=\footnotesize, anchor=north east, legend columns=1, legend cell align=left}
			]

			\addplot[mark options={scale=1,solid}, error bars/y dir=both, error bars/y explicit]
			table[x index=0, y index=1, y error index=2] {data/mdtd_shdetour_u.dat};	

			\addplot[mark=x, mark options={scale=1.5,solid}, error bars/y dir=both, error bars/y explicit]
			table[x index=0, y index=9, y error index=10] {data/mdtd_shdetour_u.dat};	

			\legend{MDTD-CG/SP/opt\\Single-hop\\}
			
		\end{axis}

	\end{tikzpicture}
	\caption{Worst idleness $WI$ for \mbox{MDTD-CG/SP/opt} and Single-hop-detour with varying number of robots $n$, and neighboring cells are within communication range ($R^{com}=1$).}
	\label{fig:res_mdtd_shdetour_u_wi}
\end{figure}
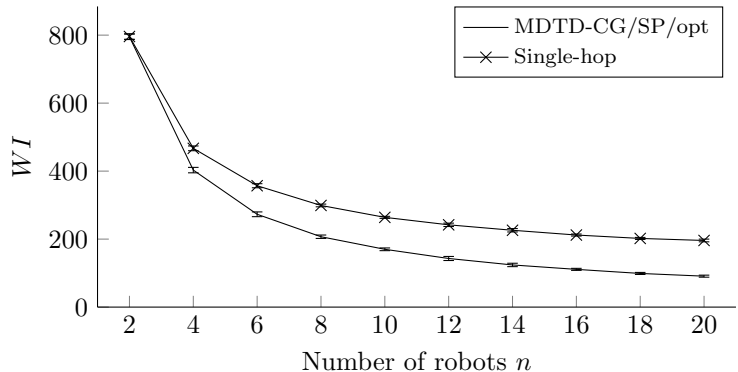

\begin{table}
	\centering
	\begin{tabular}{lrrrrrr}
		\hline
		$n$        & 50 & 100 & 150 & 200 & 250 & 300 \\
		\hline
		MDTD-SP    &  0 &   0 &  0  &  0  &   0 &  0 \\
		MDTD-CG    &  0 &   0 &  1  &  3  &   8 & 18 \\
		\hline
		\hline
		$n$        &  8 &  10 &  12 &  14  & 16 &   18 \\
		\hline
		MDTD (opt) &  8 &  28 & 111 & 565 &  1625 & 9704 \\
		\hline
	\end{tabular}
\caption{Average computation times (sec) for different number of robots $n$ for \mbox{MDTD-CG/SP/opt}.}
\label{tab:soltime}
\end{table}

There are situations for which the single-hop approach performs arbitrarily bad in terms of $WI$, e.g., the delay is unbounded if it is not possible for each robot to travel to the base station due to obstacles. Figure~\ref{fig:scenario_corridor} shows a scenario (20x40 cells) with predefined tours where the worst idleness and delay for \mbox{MDTD-CG} is 31 and 65, respectively, and for the single-hop approach 93 and 51, respectively (all tours have approximately the same length of 30 cells). The large worst idleness of the single-hop approach compared to \mbox{MDTD-CG} is obvious, since the robots, which traverse the right most tours, have long paths to the base station, whereas $WD$ is only slightly larger for \mbox{MDTD-CG}, since the data follows a path that is close to the shortest one to the base station.

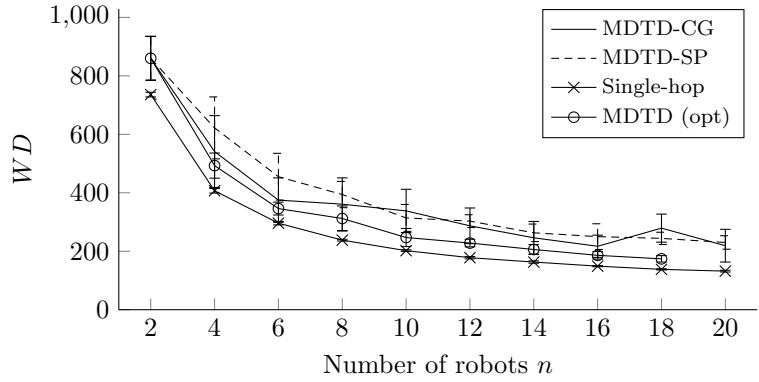
\begin{figure}
	\centering
	\begin{tikzpicture}
		\begin{axis}[
			height=4cm,
			width=0.7\columnwidth,			
			scale only axis,			
			xmin=1, xmax=21,
			ymin=0,
			axis y line*=left,
			axis x line*=bottom,			
			xtick=data,
			xlabel=Number of robots $n$,
			ylabel=$WD$,
			legend style={at={(0.98, 1)}, font=\footnotesize, anchor=north east, legend columns=1, legend cell align=left}
			]

			\addplot[style=solid, mark options={scale=1.5,solid}, error bars/y dir=both, error bars/y explicit]
			table[x index=0, y index=3, y error index=4] {data/mdtd_shdetour_u.dat};	
			
			\addplot[style=densely dashed, mark options={scale=1.5,solid}, error bars/y dir=both, error bars/y explicit]
			table[x index=0, y index=15, y error index=16] {data/mdtd_shdetour_u.dat};	

			\addplot[mark=x, style=solid, mark options={scale=1.5,solid}, error bars/y dir=both, error bars/y explicit]
			table[x index=0, y index=11, y error index=12] {data/mdtd_shdetour_u.dat};	

			\addplot[mark=o, style=solid, mark options={scale=1,solid}, error bars/y dir=both, error bars/y explicit]
			table[x index=0, y index=7, y error index=8] {data/mdtd_shdetour_u.dat};	

			\legend{MDTD-CG\\MDTD-SP\\Single-hop\\MDTD (opt)\\}
			
		\end{axis}
	\end{tikzpicture}
	\caption{Worst delay $WD$ for \mbox{MDTD-CG/SP/opt} and of the optimal solution (opt) for varying number of robots $n$, $R^{com}=1$.}
	\label{fig:res_mdtd_shdetour_u_wd}
\end{figure}

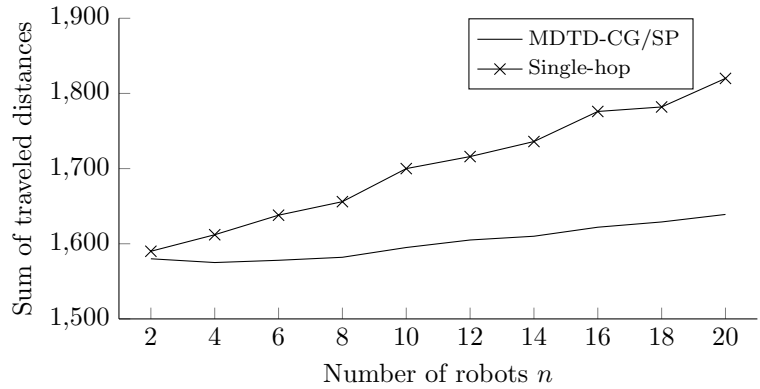
\begin{figure}
	\centering
	\begin{tikzpicture}
		\begin{axis}[
			height=4cm,
			scale only axis,
			width=0.7\columnwidth,			
			xmin=1, xmax=21,
			axis y line*=left,
			axis x line*=bottom,
			ymin = 1500, ymax = 1900,
			xtick=data,
			xlabel=Number of robots $n$,
			ylabel=Sum of traveled distances,
			legend style={at={(0.9, 1)}, font=\footnotesize, anchor=north east, legend columns=1, legend cell align=left}
			]

			\addplot[mark options={scale=1,solid}]
			coordinates { 
				(2, 1580)
				(4, 1575)
				(6, 1578)
				(8, 1582)
				(10, 1595)
				(12, 1605)
				(14, 1610)
				(16, 1622)
				(18, 1629)
				(20, 1639)
				};

			\addplot[mark=x, mark options={scale=1.5,solid}]
			coordinates { 
				(2, 1590)
				(4, 1612)
				(6, 1638)
				(8, 1656)
				(10, 1700)
				(12, 1716)
				(14, 1736)
				(16, 1776)
				(18, 1782)
				(20, 1820)
				};

			\legend{MDTD-CG/SP\\Single-hop\\}
			
		\end{axis}

	\end{tikzpicture}
	
	\caption{Sum of the traveled distances (number of steps in the grid) for \mbox{MDTD-CG/SP} and Single-hop-detour with varying number of robots $n$. The time within the distances have been calculated is $WI$ achieved by \mbox{MDTD-CG/SP}.}
	\label{fig:res_mdtd_shdetour_u_dist}
\end{figure}

\begin{figure}
	\centering
	\includegraphics[scale=0.3]{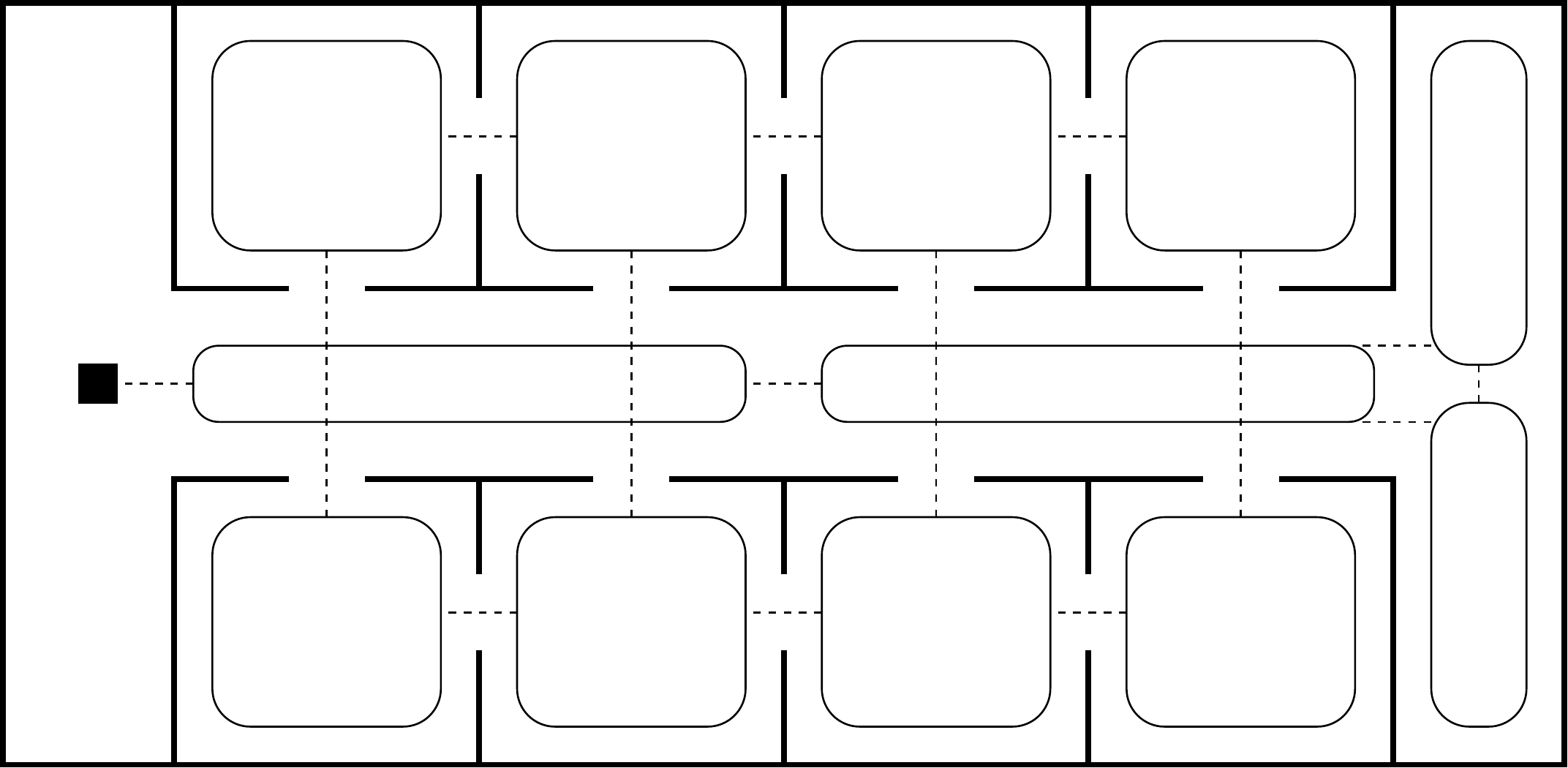}
	\caption{A scenario of size 20x40 cells for comparison of \mbox{MDTD-CG} with a single-hop approach. The bold lines show the border and obstacles of the environment (which prohibit movement and communication), the rectangles with rounded edges show the tours for the robots and the dashed lines show possible meeting points for \mbox{MDTD-CG}.}
	\label{fig:scenario_corridor}
\end{figure}

\section{Conclusion}
\label{sec:conclusion}

Multi-robot patrolling is an important application of multi-robot systems, and in certain situations it is not only important that sensing locations get visited repeatedly but also that the data reaches a base station on time for further processing or for an assessment by mission operators. This is typically required in disaster response scenarios where the mission operators need an up-to-date view of the situation. We presented a multi-robot patrolling problem with cooperative data transport to the base station where robots move on predefined tours which eliminates the need for every robot to return to the base station for data delivery. MDT represents the problem of minimizing the data delay which turns out to be NP-hard although its simple definition which is decoupled from path planning. Explicitly minimizing delay for patrolling with cooperative multi-robot data transport has not been investigated so far to the best of our knowledge. We presented heuristics and an algorithm for online execution and evaluated the performance in simulation experiments. The comparison of MDT with an uncooperative approach (every robot individually transports the data to the base station) on predefined tours shows that the cooperative approach can outperform the uncooperative approach in terms of WI and the traveled distances. The reason is, that the robots that handover the data to other robots (in this way the data finally reaches the base station) can continue patrolling their tours, while in the uncooperative approach, robots are forced to leave their tour to move to the base station.

The problem relies on TSP tours (and subtours derived from the TSP tours) through all sensing locations. In our work they are generated with traditional algorithms that try to minimize the length of the tour (and minimize the maximum length of the subtours). An open issue is the generation of such tours that support the joint minimization of idleness and delay. Other open questions are whether there are approximation algorithms with guaranteed bounds and whether some instance classes (e.g. planar graphs) can be solved optimally in polynomial time.



\section*{Appendix}

\subsection{MILP formulation of MDTD}
The mixed integer linear programming (MILP) model of MDTD is based on a multi-commodity flow formulation for trees on a graph $G=(V, A)$ with $n+1$ vertices $V$ (including a vertex 0 for a virtual base station tour) and arc set $A$ \cite{Magnanti1995}. The base station is the source of a commodity flow $f^c_e$ for each vertex (constraint~(\ref{eq:mdtdip_flowbs})). A flow of commodity $c$ represents the path of the data from robot $c$ towards the base station (though the flow originates at the base station in this formulation). For each vertex the sum of incoming flows is equal to the sum of outgoing flows for each commodity not dedicated to that vertex (constraint~(\ref{eq:mdtdip_flowvc})), and each vertex $c$ consumes the commodity of type $c$ (constraint~(\ref{eq:mdtdip_flowcc})). There can be only a flow on an edge if this edge is selected in the tree (constraint~(\ref{eq:mdtdip_fy})) and the sum of the edges must be $n$ (constraint~(\ref{eq:mdtdip_sumy})).

\begin{align}
	\sum_{(v,0) \in A}{f_{v0}^c} - \sum_{(0,v) \in A}{f_{0v}^c} &= -1 \quad \forall c \in V \setminus \{0\} \label{eq:mdtdip_flowbs} \\
	\sum_{(w,v) \in A}{f_{wv}^c} - \sum_{(v,w) \in A}{f_{vw}^c} &= 0 \quad \forall v \in V \setminus \{0, c\}, \forall c \in V \label{eq:mdtdip_flowvc} \\
	\sum_{(w,c) \in A}{f_{wc}^c} - \sum_{(c,w) \in A}{f_{cw}^c} &= 1 \quad \forall c \in V \setminus \{0\} \label{eq:mdtdip_flowcc} \\
	f_e^c &\leq x_e \quad \forall e \in A, \forall c \in V \setminus \{0\} \label{eq:mdtdip_fy} \\
	\sum_{e \in A}{x_e} &= n \label{eq:mdtdip_sumy} \\
	x_e &\in \{0, 1\} \\ 
	f_e^c &\geq 0 \quad \forall e \in A, \forall c \in V \setminus \{0\}
\end{align}

The data which robot $j$ gets at the meeting point between $i$ and $j$ and is forwarded at meeting point between $j$ and $k$ has to travel the distance $l_{ik}^{j,ccw}$ or $l_{ik}^{j,cw}$ on tour $j$, depending on the direction robot $j$ traverses its tour. Therefore, two flow variables $f_{ij}^c$ and $f_{jk}^c$ are involved in the cost calculation in constraint~(\ref{eq:mdtdip_objc}) for data originating from $c$ and traversing the tour $j$. The separation of the flows in this formulation allows the definition of a min-max objective. For each commodity $c$, $z_c$ models the delay of data originating at robot $c$ and the objective is to minimize $z$. The decision variables $u_j^{ccw}$ and $u_j^{cw}$ determine the direction robot $j$ traverses its tour.

\begin{align}
	z_c =& u_j^{ccw} \sum_{(j,c)\in A}{f_{jc}^c(l_c-l_c^d(p_c^{meet}(j), ccw))} + \\
	& u_j^{cw} \sum_{(j,c)\in A}{f_{jc}^c(l_c-l_c^d(p_c^{meet}(j), cw))} + \\
	& \sum_{(i,j),(j,k)\in A}{f_{ij}^c f_{jk}^c u_j^{ccw} l_{ik}^{j,ccw}} + f_{ij}^c f_{jk}^c u_j^{cw} l_{ik}^{j,cw} \quad \forall c \in V \setminus \{0\} \label{eq:mdtdip_objc} \\
	z_c &\leq z \quad \forall c \in V \setminus \{0\} \\
	u_j^{ccw} + u_j^{cw} &= 1 \quad \forall j \in V \setminus \{0\} \\
	u_j^{ccw}, u_j^{cw} &\in \{0, 1\} \quad \forall j \in V \setminus \{0\}
\end{align}

The products, e.g. $f_{ij}^c f_{jk}^c u_j^{ccw}$, can be linearized (likewise $f_{ij}^c f_{jk}^c u_j^{cw}$) with an additional variable $f_{ijk}^{c, ccw}$ and the constraints:

\begin{align}
f_{ijk}^{c, ccw} &\leq f_{ij}^c \\
f_{ijk}^{c, ccw} &\leq f_{jk}^c \\
f_{ijk}^{c, ccw} &\leq u_j^{ccw} \\
f_{ijk}^{c, ccw} &\geq f_{ij}^c + f_{jk}^c + u_j^{ccw} - 2
\end{align}

\subsection{List of symbols}

{\footnotesize
\begin{tabbing}
	\textbf{Symbol} \hspace{6em} \= \textbf{Meaning} \\
	$X$ \> set of points of environment \\
	$p_s$ \> sensing locations \\
	$Y$ \> communication relation \\
	$R=\{1, \ldots, n\}$ \> set of $n$ robots/tours \\
	$\pi \in \Pi$ \> patrolling strategy/schedule \\
	\> (from the set of all strategies $\Pi$) \\
	$\pi^+$ \> repeated schedule (repetition of schedule $\pi$) \\
	$\mathbb{R}_{\geq 0}$ \> set of real numbers larger or equal $0$ \\
	$I_t^{\pi}(x)$ \> instantaneous idleness of $x$ at time $t$ (using $\pi$) \\
	$D_t^{\pi}(x, t', t'')$ \> instantaneous delay of $x$ at time $t$ (using $\pi$) \\
	$WI_t^{\pi}(x)$ \> instantaneous worst idleness at time $t$ (using $\pi$) \\
	$WD_t^{\pi}(x)$ \> instantaneous worst delay at time $t$ (using $\pi$) \\
	$WI, WD$ \> worst idleness, worst delay \\
	$G=(V, E)$ \> (tour) graph with vertex set $V$ and edge set $E$ \\
	$G=(V, A)$ \> (tour) graph with vertex set $V$ and arc set $A$ \\
	$T=(V, E)$ \> (tour) tree with vertex set $V$ and edge set $E$ \\
	$[v,w] \in E$ \> (undirected) edge between $v$ and $w$ \\
	$(v,w) \in A$ \> (directed) arc from $v$ to $w$ \\
	$G'=(V',E',W)$ \> converted graph of tour graph $G$ \\
	$v_{kl}$ \> vertex of converted tour graph \\
	$v_0$ or $0$ \> base station \\
	$l_v$ \> minimum traversal time (without stops) of tour $v$ \\
	$L$ \> $\max_{v\in V}\{l_v\}$ \\ 
	$d_v$ \> direction robot $v$ traverses its tour (cw or ccw) \\
	$p_r(t)$ \> position of robot $r$ at time $t$ \\
	$time_v(p,q,d)$ \> minimum travel time on tour $v$ from point $p$ to point $q$ \\
	\> with direction $d$ \\
	$p_v^{start}$ \> start position of robot $v$ on its tour $v$ \\
	$p_v^{meet}(w)$ \> meeting point of robot $v$ on tour $v$ with robot $w$ \\
	$wait_v(p)$ \> waiting time for robot $v$ on meeting point $p$ \\
	$\Delta_{wv}$ \> $wait_w(p_w^{start})-wait_v(p_{v}^{start})$ \\
	$P(v)$ \> set of all positions on tour $v$ \\
	$dist_G(s,d)$ \> length of shortest path between vertices $s$ and $d$ \\
	\> in (weighted) graph $G$
\end{tabbing}
}

\bibliographystyle{plain}
\bibliography{references}

%
%

\end{document}